\newcommand{\method}{Quilt}
\theoremstyle{definition}
\newtheorem{definition}{Definition}
\newtheorem{theorem}{Theorem}
\newtheorem*{theorem*}{Theorem}
\title{Quilt: Robust Data Segment Selection against Concept Drifts}
\author{
    Minsu Kim, Seong-Hyeon Hwang, Steven Euijong Whang
}
\begin{document}

\maketitle

\begin{abstract}

Continuous machine learning pipelines are common in industrial settings where models are periodically trained on data streams. Unfortunately, {\em concept drifts} may occur in data streams where the joint distribution of the data $X$ and label $y$, $P(X, y)$, changes over time and possibly degrade model accuracy. Existing concept drift adaptation approaches mostly focus on updating the model to the new data possibly using ensemble techniques of previous models and tend to discard the drifted historical data. However, we contend that explicitly utilizing the drifted data together leads to much better model accuracy and propose \method{}, a {\em data-centric} framework for identifying and selecting data segments that maximize model accuracy. To address the potential downside of efficiency, \method{} extends existing data subset selection techniques, which can be used to reduce the training data without compromising model accuracy. These techniques cannot be used as is because they only assume virtual drifts where the posterior probabilities $P(y|X)$ are assumed not to change. In contrast, a key challenge in our setup is to also discard undesirable data segments with concept drifts. \method{} thus discards drifted data segments and selects data segment subsets holistically for accurate and efficient model training. The two operations use gradient-based scores, which have little computation overhead. In our experiments, we show that \method{} outperforms state-of-the-art drift adaptation and data selection baselines on synthetic and real datasets.

\end{abstract}

\section{Introduction}

Robust AI is becoming important especially in continual learning, which is common in industrial settings where models need to be periodically trained on data streams. The applications include manufacturing, meteorology, finance, and more. Here we assume that {\em concept drifts} can occur where the joint distribution of the data $X$ and label $y$, $P(X, y)$ may change over time, leading to decision boundary shifts and model accuracy degradation.

A naive approach of concept drift adaptation is to discard all the historical data, but we would like to retain previous data that is still useful for training. There are many drift adaptation techniques for concept drifts, but most of them take a model-centric approach where they assume that any drifted data is either discarded or replaced with trained models and focus on updating the model or taking an ensemble of previous models to be accurate on the new data. However, throwing away old data is often unacceptable due to heavy investments in data labeling. Even if the knowledge of the data is preserved in the form of models, there are limitations in how much knowledge they retain for making accurate predictions on the new data.

Instead, we contend that taking a {\em data-centric} approach of selecting data segments to train a model can be a more fundamental solution. We assume that any drift detection technique can be used to identify drift points in the data and thus divide the data into {\em data segments} using these points. We then formulate the problem of efficiently selecting which data segments result in the best model accuracy when combined with the current (i.e., newest) data segment.

A key challenge for data-centric approaches is efficiency, and we utilize a recent line of data subset selection techniques\,\cite{DBLP:conf/aaai/KillamsettySRI21,DBLP:conf/icml/KillamsettySRDI21} where the goal is to select a minimal subset of the training data using a validation set for training efficiency while obtaining a similar model accuracy as when training on the entire data. However, these techniques assume virtual drifts where the posterior distribution $P(y|X)$ (i.e., the decision boundary) does not change. The assumption makes sense in this problem because any data can be selected to possibly improve model accuracy, and it is a matter of which data is more useful. In comparison, a concept drift setup assumes that the decision boundary may change, which means that some data may negatively affect model accuracy. As a result, we need to solve the more general problem of performing data segment subset selection while discarding data segments with concept drifts.

We then propose a robust data segment selection framework against concept drifts called \method{} for the purpose of improving model accuracy on recent data. \method{} iteratively performs concept drift detection using conventional detection methods and also selects data segments for training the model if a drift occurs. When selecting data segments, \method{} discards data segments with concept drifts using a {\em disparity} score and also selects a minimal subset of segments without concept drifts such that the model performance is not sacrificed using a {\em gain} score. Both scores can be computed using gradient values on the training and validation sets with little overhead in computation. We also provide theoretical evidence on why the disparity score is effective.

In our experiments, \method{} performs better overall than state-of-the-art drift adaptation baselines on synthetic and real datasets. The benefits are mainly from effectively utilizing previous data segments. In addition, \method{} outperforms existing data-centric concept drift adaptation techniques\,\cite{DBLP:journals/ijon/Ramirez-Gallego17,DBLP:journals/tcyb/DongLSLZ22,DBLP:conf/kdd/Fan04} because they do not explicitly evaluate models on selected data segments as \method{} does.

\textbf{Summary of Contributions:} (1) We propose \method{}, a robust data segment selection framework against concept drifts. (2) We design an efficient data subset selection algorithm that holistically selects core data segments while discarding those with concept drifts. (3) We perform extensive experiments on synthetic and real benchmarks and show that \method{} achieves state-of-the-art accuracy and is efficient.

\section{Preliminaries}
\label{sec:background}

\paragraph{Concept Drift}

A concept drift occurs when the statistical properties of a target domain changes arbitrarily\,\cite{DBLP:journals/tkde/LuLDGGZ19}. Suppose we have the time period [0, $t$] and a sequence of samples $S_{0, t} = \{s_0, \ldots, s_t\}$ where each sample $s_i$ consists of features $X_i$ and a label $y_i$. If the distribution of $S_{0, t}$ is represented as $P_{0, t}(X,y)$, a concept drift at timestamp $t+1$ is formally defined as $P_{0, t}(X,y) \neq P_{t+1, \infty}(X,y)$. Here, we consider $S_{0, t}$ as a previous segment and arriving samples from time $t+1$ as a current segment until the next drift occurs. Note that we may need to look at samples beyond $t+1$ to actually detect the drift.

The joint distribution can be decomposed into a prior distribution and posterior distribution as follows: $P_t(X, y) = P_t(X) \cdot P_t(y|X)$. While many works relevant to distribution drifts assume that the posterior distribution stays the same (referred to as a virtual drift), we assume more realistic drifts where the posterior distribution may change (referred to as an actual or concept drift).

\paragraph{Data Segments}

\method{} assumes that the input data stream consists of {\em data segments} $\mathcal{D} = \{d_1, d_2, \ldots, d_N\}$ where each segment represents a concept. To identify a data segment, any concept drift detection technique\,\cite{DBLP:journals/tkde/LuLDGGZ19,DBLP:journals/datamine/WebbHCNP16,DBLP:journals/inffus/KrawczykMGSW17,gama2014survey} can be used. If two concept drifts are detected at $t_1$ and $t_2$, we assume the data within the time interval [$t_1$, $t_2$] has the same concept and forms a data segment. While some of the previous data segments may benefit the model accuracy on the newest concept, others may even have a negative impact.

\paragraph{Data Subset Selection}

The goal of data subset selection is efficient learning by taking a minimal subset of the training data while still obtaining similar model accuracy. A common approach is to perform coreset selection, which selects the weighted subsets of data that estimate certain properties of the full data, such as the loss or gradient. More recently, data subset selection frameworks like GLISTER\,\cite{DBLP:conf/aaai/KillamsettySRI21} and GRAD-MATCH\,\cite{DBLP:conf/icml/KillamsettySRDI21} focus on both efficiency and robustness using a clean held-out validation set. However, most of these works assume that the posterior distribution $P(y|X)$ (i.e., the decision boundary) does not change, which is not true for concept drifts. While \method{} extends these techniques, it solves the more general problem of handling concept drifts.

\section{Problem Definition}
\label{sec:problemdefinition}

Given an input stream of data segments $\mathcal{D} = \{d_1, d_2, \ldots, d_N\}$, our goal is to make accurate predictions on the current (i.e., newest) data segment $d_N$. As with other existing concept drift works, we assume a multi classification setup. We assume the test set $d_T$ is a subset of $d_N$. For the training and validation sets, a simple setup is to use the previous drifted data only. However, if the previous drifted data is significantly different than $d_N$, then \method{} will not be effective in selecting data segments that are most suitable for the current data segment. Hence, we take two portions of $d_N$ excluding the test set $d_T$ and add one ($d_N^T \subseteq d_N - d_T$) to the training set, while using the other ($d_N^V = d_N - d_T - d_N^T$) as the validation set. To ensure that the current data segment is large enough for such constructions, we train a model only after at least a certain number of samples (say 100) have been collected since the last concept drift. We then select a subset of the previous data segments $\{d_1, d_2, \ldots, d_{N-1}\}$ that minimizes a trained model's loss on $d_N^V$ when added to the original training set $d_N^T$. Denoting $L(\theta, S)$ as the loss on a set $S$ using model parameters $\theta$, our problem can be defined as follows:
\[
\underset{S \subseteq \{d_1, d_2, \ldots, d_{N-1}\}}{\arg\min} \   L(\underset{\theta}{\arg\min} \ L(\theta, d_N^T \cup S), d_N^V)
\]

\begin{figure}[t]
\centering
  \includegraphics[width=\columnwidth]{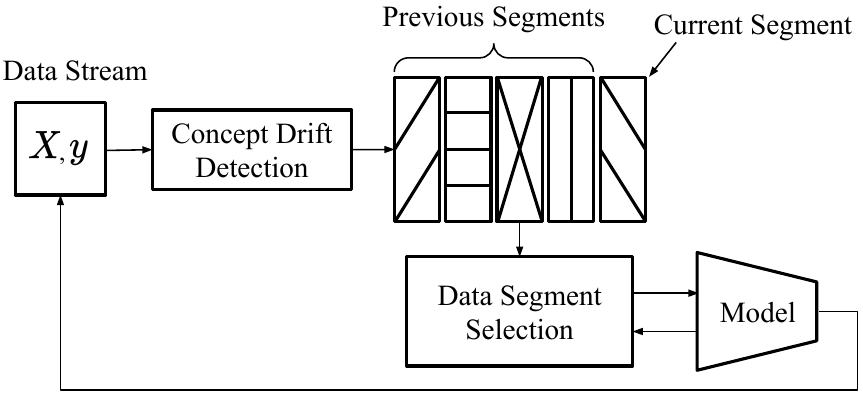}
  \caption{The workflow of \method{}.}
\label{fig:framework}
\end{figure}

\section{\method{} Overview}
\label{sec:overview}

We describe the overall process of \method{} in Figure~\ref{fig:framework} (the full algorithm is in the appendix). For each sample from the input data stream, the concept drift detection component checks if there is a concept drift. There are many existing drift detection methods that use the change of data distribution or model performance, and one can plugin any one of them. If there is a drift, a new data segment is created from the drift point and becomes the current data segment. The data segment selection component then selects segments used to update the model (explained in the next section). If there is no drift, the sample is added to the current data segment.

\section{Data Segment Selection}

When selecting data segments in the presence of concept drifts, \method{} performs two key operations in a holistic framework: (1) discard data segments that represent drifted concepts compared to the current data segment and (2) select a core subset of data segments that have not drifted for efficient model training without compromising accuracy. The two operations utilize {\em disparity} and {\em gain} scores, which are based on gradient values on the training and validation sets and thus have little computation overhead. An additional benefit of using such gradient-based scores is that they are agnostic to the data characteristics. In comparison, statistical distance functions like total variation and Kullback-Leibler divergence are known to have limitations on multivariate data with numeric features and do not scale well to large datasets with high dimensions\,\cite{DBLP:journals/kais/GoldenbergW19}. In the next sections, we explain the gradient computation and elaborate on each score.

\subsection{Gradient Computation}

When computing gradients, we assume neural networks that consist of front layers, which transform the input data to significant embedding features, and a last layer that makes the logit outputs of each class. Let $X_i' \in \mathbb{R}^{d'}$ be the embedding feature of the $i$th input data $X_i$ with a hidden layer dimension of $d'$, and $z_i \in \mathbb{R}^{c}$ be the logit outputs computed by $z_i = w \cdot X_i' + b$ using the last layer weights $w \in \mathbb{R}^{d'\times c}$ and bias $b \in \mathbb{R}^{c}$. To convert a logit $z_i$ into a probability vector $\hat{y}_i$, we use a softmax function: $\hat{y}_i = e^{z_i} / \sum_{i=1}^{c} e^{z_i}$. We can also rewrite the model output $\hat{y}_i$ as a function of the model parameters $\theta$ and input data $X_i$ as $\hat{y}_i = f_{\theta}(X_i)$. With the model output $\hat{y}_i$ and truth label $y_i$, we compute cross-entropy loss between them as $L_i = L(y_i,\hat{y}_i) = -\sum_{j=1}^{c} y_{ij} \cdot \log(\hat{y}_{ij})$. We use last layer gradient approximation $g = (\nabla_{b}L, \nabla_{w}L$) where gradients of the front layers are not used\,\cite{DBLP:conf/icml/KatharopoulosF18, DBLP:conf/iclr/AshZK0A20, DBLP:conf/icml/MirzasoleimanBL20}. Using the chain rule, we can compute the gradient of the $i$th sample as follows: $g_i = (\nabla_{b}L_i, \nabla_{w}L_i) = (\hat{y}_i - y_i, (\hat{y}_i - y_i) \cdot X_i')$.

\subsection{Disparity Score}

We propose a disparity score (abbreviated as $\mathcal{D}$) that measures the dissimilarity between two data distributions and can be used to discard data segments that have concept drifts. Assuming that the data distribution $P(X)$ does not change, a concept drift causes the posterior distribution $P(y|X)$ to change. Hence, a concept drift is similar to how much $y$ changes for the same data. We can capture this notion of disparity in the measure $\mathbb{E}[\|y_t - y_v\|]$, which is the expected amount of label change in a sample where $y_t$ and $y_v$ are truth labels from a training subset and a validation set, respectively. This notion is similar to concept drift severity\,\cite{DBLP:journals/tkde/MinkuWY10}. Directly computing this measure can be expensive where we need to find similar samples in the training and validation sets and measure their label differences. Instead, we define a gradient-based score that is a proxy of this measure and can be computed very efficiently.

\begin{definition}
    The disparity score of a training subset $T$ w.r.t a validation set $V$ is defined as $\mathcal{D}(T, V) = \| \frac{1}{|T|} \sum_{t=1}^{|T|} g_t - \frac{1}{|V|} \sum_{v=1}^{|V|} g_v \| = \| \mathbb{E}[g_t] - \mathbb{E}[g_v] \|$.
\end{definition}

The $\mathcal{D}$ score thus measures the $L_2$-norm distance between two gradient vectors. Intuitively, if a model is fixed, two data segments with similar data distributions should have similar gradients (i.e., low disparity) and vice versa. 

We provide a theoretical justification on why the $\mathcal{D}$ score captures concept drift. For analysis purposes, we make the simplifying assumption that the prior distributions of the training and validation sets are the same, although their labels are different. The proof is in the appendix.

\begin{theorem}
    If training subset $T$ and validation set $V$ have the same prior distribution $P_T(X) = P_V(X)$, but different posterior distributions $P_T(y|X) \neq P_V(y|X)$, then $\mathcal{D}(T,V) \leq \mathbb{E}[\|y_t - y_v\|] \sqrt{1 + \sigma^2}$ where $\sigma = \max(\| \mathbb{E}[X'] \|)$.
\end{theorem}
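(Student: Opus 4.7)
My plan is to expand the gradient difference $\mathbb{E}[g_t] - \mathbb{E}[g_v]$ in closed form using the formula $g_i = (\hat{y}_i - y_i,\,(\hat{y}_i - y_i)(X_i')^\top)$ derived in the gradient computation subsection, kill the model-dependent terms using the shared prior assumption, and then bound the remaining label-difference terms by Jensen's inequality.

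First I would split the definition of $\mathcal{D}(T,V)^2$ along the bias/weight blocks of the gradient, obtaining a sum of a squared Euclidean norm for the $\hat{y}-y$ block and a squared Frobenius norm for the $(\hat{y}-y)(X')^\top$ block. Next, because the model output $\hat{y} = f_\theta(X)$ and the embedding $X'$ are deterministic functions of $X$ once the model is fixed, the assumption $P_T(X) = P_V(X)$ makes $\mathbb{E}_T[\hat{y}] = \mathbb{E}_V[\hat{y}]$ and $\mathbb{E}_T[\hat{y}(X')^\top] = \mathbb{E}_V[\hat{y}(X')^\top]$, so these $\hat{y}$-pieces cancel in the difference. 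What is left, after invoking the tower property over the common $X$-marginal, is
\begin{equation*}
\mathbb{E}[g_t] - \mathbb{E}[g_v] = \bigl(\mathbb{E}_X[\Delta(X)],\;\mathbb{E}_X[\Delta(X)(X')^\top]\bigr),\qquad \Delta(X) := \mathbb{E}[y_v \mid X] - \mathbb{E}[y_t \mid X].
\end{equation*}

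Then I would apply Jensen's inequality twice per block: once to pass the outer expectation through the norm ($\|\mathbb{E}[\cdot]\| \leq \mathbb{E}[\|\cdot\|]$), and once to replace $\|\Delta(X)\| = \|\mathbb{E}[y_v - y_t \mid X]\|$ by $\mathbb{E}[\|y_v - y_t\| \mid X]$. For the weight block I would use $\|\Delta(X)(X')^\top\|_F = \|\Delta(X)\|\,\|X'\|$ and bound $\|X'\|$ by $\sigma$. Summing the squared blocks gives $\mathcal{D}(T,V)^2 \leq (1+\sigma^2)\,\mathbb{E}[\|y_t - y_v\|]^2$, and a square root closes the argument.

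The main obstacle is the interpretation of $\sigma = \max(\|\mathbb{E}[X']\|)$: the natural factorisation above pulls out $\|X'\|$ pointwise rather than the norm of its expectation, so either (a) $\sigma$ should be read as a uniform bound on $\|X'\|$ that is consistent with the ``$\max$'' notation, or (b) I would need to rearrange the weight-block expectation to isolate $\mathbb{E}[X']$ first (for instance, when $\Delta(X)$ is essentially constant in $X$) and only then apply Cauchy--Schwarz. A secondary subtlety worth flagging in the write-up is that $\mathbb{E}[\|y_t - y_v\|]$ must be read under the coupling where $y_t, y_v$ are drawn conditionally on the same $X \sim P_T(X) = P_V(X)$; this is exactly what the tower step above enforces.
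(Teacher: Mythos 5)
Your proposal is essentially correct and reaches the bound by a genuinely different middle step than the paper. Where you cancel the $\hat{y}$-terms by conditioning on the common $X$-marginal (the tower property) and then work with $\Delta(X)=\mathbb{E}[y_v\mid X]-\mathbb{E}[y_t\mid X]$, the paper instead adds and subtracts $(\hat{y}_v-y_v)\cdot X_t'$ and then \emph{asserts} that $\hat{y}_t-\hat{y}_v$ is uncorrelated with $X_t'$, and later that $y_t-y_v$ is independent of $X_t'$, so that every expectation of a product factors as a product of expectations; this is how it lands exactly on $\|\mathbb{E}[y_t-y_v]\|\cdot\|\mathbb{E}[X']\|$ and hence on the stated constant $\sigma=\max(\|\mathbb{E}[X']\|)$. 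Your route avoids those unstated independence assumptions and is the more rigorous of the two on the cancellation step, but, as you correctly flag, it pulls out $\|X'\|$ pointwise and therefore proves the inequality with $\sup_x\|X'\|$ in place of $\|\mathbb{E}[X']\|$ --- a valid but strictly weaker bound (larger constant), since $\|\mathbb{E}[X']\|\le\mathbb{E}[\|X'\|]\le\sup\|X'\|$. To recover the theorem's exact constant you would need precisely your option (b), i.e.\ the additional hypothesis that the label difference is independent of the embedding so that $\mathbb{E}[(y_t-y_v)\cdot X_t']=\mathbb{E}[y_t-y_v]\cdot\mathbb{E}[X_t']$; this is the assumption the paper's proof uses without listing it in the theorem statement. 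In short: your decomposition buys rigor at the price of a looser constant, the paper's buys the tight constant at the price of extra (and debatable, since labels generally do depend on features) independence assumptions; your final Jensen step and the block-wise summation $\mathcal{D}^2\le(1+\sigma^2)\,\mathbb{E}[\|y_t-y_v\|]^2$ coincide with the paper's.
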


In practice, the prior distribution may change, but we show in our experiments that the $\mathcal{D}$ score is still effective for measuring drifts.

\subsection{Gain Score}

Our data subset selection is based on theoretical foundations of the recent data subset selection literature\,\cite{DBLP:conf/aaai/KillamsettySRI21,DBLP:conf/icml/KillamsettySRDI21}. 
Suppose there exists historical data for training and a validation set.
It is known that selecting a data subset whose inner product of the average gradients on the subset and the validation set (called the {\em gain}) is positive results in a reduction of the model's validation loss at each epoch\,\cite{DBLP:conf/aaai/KillamsettySRI21}.
Intuitively, a gradient vector represents the magnitude and direction of the model parameter updates when performing gradient descent, and it is desirable for the gradients of the training and validation sets to align.
Computing the gradient values can be extended to data segments, and we define the {\em gain} score for data segments (abbreviated as $\mathcal{G}$) as follows:
\begin{definition}
    The gain score of a training subset $T$ w.r.t. a validation set $V$ is $\mathcal{G}(T, V) = \frac{1}{|T|} \sum_{t=1}^{|T|} g_t \cdot \frac{1}{|V|} \sum_{v=1}^{|V|} g_v = \mathbb{E}[g_t] \cdot \mathbb{E}[g_v]$.
\end{definition}

Compared to the disparity score, the gain score is less sensitive to concept drifts as it is only affected by the magnitude of the two gradients and angle between them, and not the label differences.
Unlike existing data subset selection works where the subset size is set in advance (e.g., top-10\% samples), we opt to select all the data segments that have a positive gain score. The reason is that the data segments may contain some levels of concept drift even after discarding the ones with obvious drifts using disparity scores, so we utilize the sign of the gain score to select the useful data segments that can reduce model loss. This issue does not occur if there are no concept drifts, as no data subset is assumed to decrease model performance.

\subsection{Algorithm}

Algorithm~\ref{alg:dss} shows the data segment selection algorithm of \method{}. We first initialize the model parameters (Step 1). For every epoch, we initialize the training subset to an empty set (Step 3). Next, we compute the average gradient on the validation set (Step 4). We then select all previous segments where the gain is positive, and the disparity is sufficiently small (Steps 5--10). Lastly, we add the current training set and finalize the subset for model training (Step 11). We then update the parameters based on the losses and derived gradients of the selected segments (Step 12). After $T$ epochs, we return the final model parameters (Step 13).

Next, we analyze Algorithm~\ref{alg:dss}'s complexity. We denote $N$ as the number of data segments and $|S|$ as the average number of selected data segments during $T$ epochs. Let $F$ be the forward pass complexity of the last layer and $B$ the backward pass complexity of all layers. Given one validation set, the complexity is $O((N+1)FT + |S|BT)$, where the first term is for computing the gradients of the data segments and the validation set, and the second term is for updating the model parameters with the selected segments.

\begin{algorithm}[t]
\textbf{Input}: Previous data segments $D_{prev} = \{d_1, \ldots, d_{N-1}\}$, training set $d_N^T$, validation set $d_N^V$, loss function $L$, learning rate $\eta$, maximum epochs $T$, disparity threshold $T_d$
\begin{algorithmic}[1]
\State Initialize model parameters $\theta_{0}$
\For {epoch $t$ in [1, $\ldots, T$]}
    \State Initialize training subset $S = \emptyset$
    \State $g_V = \frac{1}{|d_N^V|} \sum_{j=1}^{|d_N^V|} g_j$
    \For {segment $d$ in $D_{prev}$}
        \State $g_d = \frac{1}{|d|} \sum_{k=1}^{|d|} g_k$
        \State $\mathcal{G}_d = g_d \cdot g_V$
        \State $\mathcal{D}_d = \| g_d - g_V \|$
        \If{$\mathcal{G}_d > 0$ and $\mathcal{D}_d < T_d$}
            \State $S = S \cup d$
        \EndIf
    \EndFor
    \State $S = S \cup d_N^T$
    \State Update $\theta_{t} = \theta_{t-1} - \eta \frac{1}{|S|} \sum_{e \in S} \nabla_\theta L_e$
\EndFor
\State \textbf{return} final model parameters $\theta_{T}$
\caption{Data Segment Selection algorithm}
\label{alg:dss}
\end{algorithmic}
\end{algorithm}

\section{Case Study: Two Concepts}

In this section, we analyze the behavior of disparity and gain scores with a simple setup where there are only two concepts 0 and 1, and the samples with these concepts have labels 0 and 1, respectively. Let us say the current data segment $V$ has the concept 1, and there is one previous data segment $T$ with a concept of either 0 or 1. We assume that $P_T(X) = P_V(X)$ for simplicity. Since we use the same model state to compute the gradients of $T$ and $V$, the same prior distribution also leads to the same expected model output $\mathbb{E}[\hat{y}_t] = \mathbb{E}[\hat{y}_v] = (s_1, s_2, \ldots, s_c)$.

\paragraph{Case 1} If $T$'s concept is 1 (i.e., there is no concept drift), we can show that $\mathcal{D}(T, V) \leq 0$ and $\mathcal{G}(T, V) \geq s_1^2 + (s_2 - 1)^2 + \cdots + s_c^2$. The derivation is in the appendix. We also generate synthetic data for this scenario, shown in the appendix. We monitor the $\mathcal{D}$ and $\mathcal{G}$ scores while running \method{} and observe trends that are consistent with the derivations above: Figure~\ref{fig:case_study_disp} shows a simulation result where the $\mathcal{D}$ score is empirically close to zero (not exactly zero due to randomness in the data sampling), indicating no concept drift; and Figure~\ref{fig:case_study_gain} shows a $\mathcal{G}$ score that is positive and keeps on decreasing as the model fits the concept.

\begin{figure}[t]
\centering
  \begin{subfigure}{0.46\columnwidth}
     \centering
     \includegraphics[width=\columnwidth]{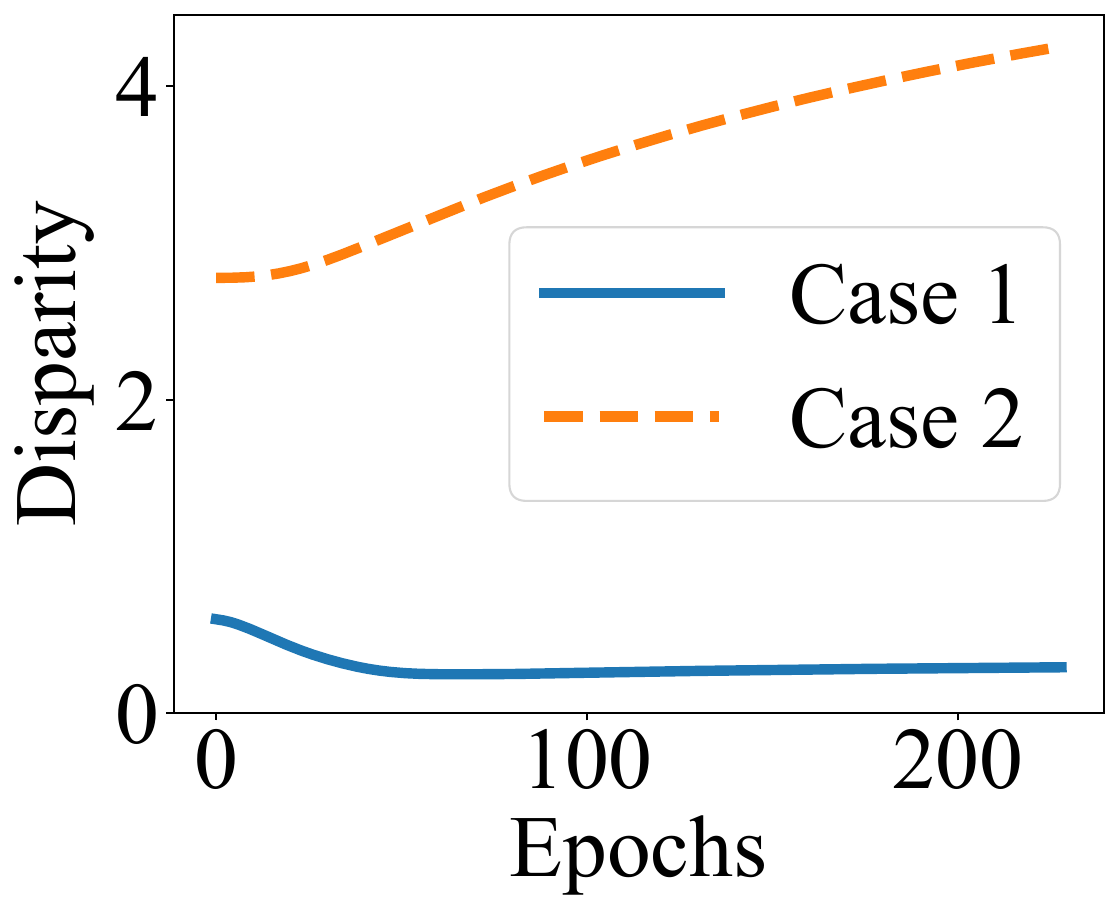}
     \caption{{\sf Disparity}}
     \label{fig:case_study_disp}
  \end{subfigure}
  \begin{subfigure}{0.46\columnwidth}
     \centering
     \includegraphics[width=\columnwidth]{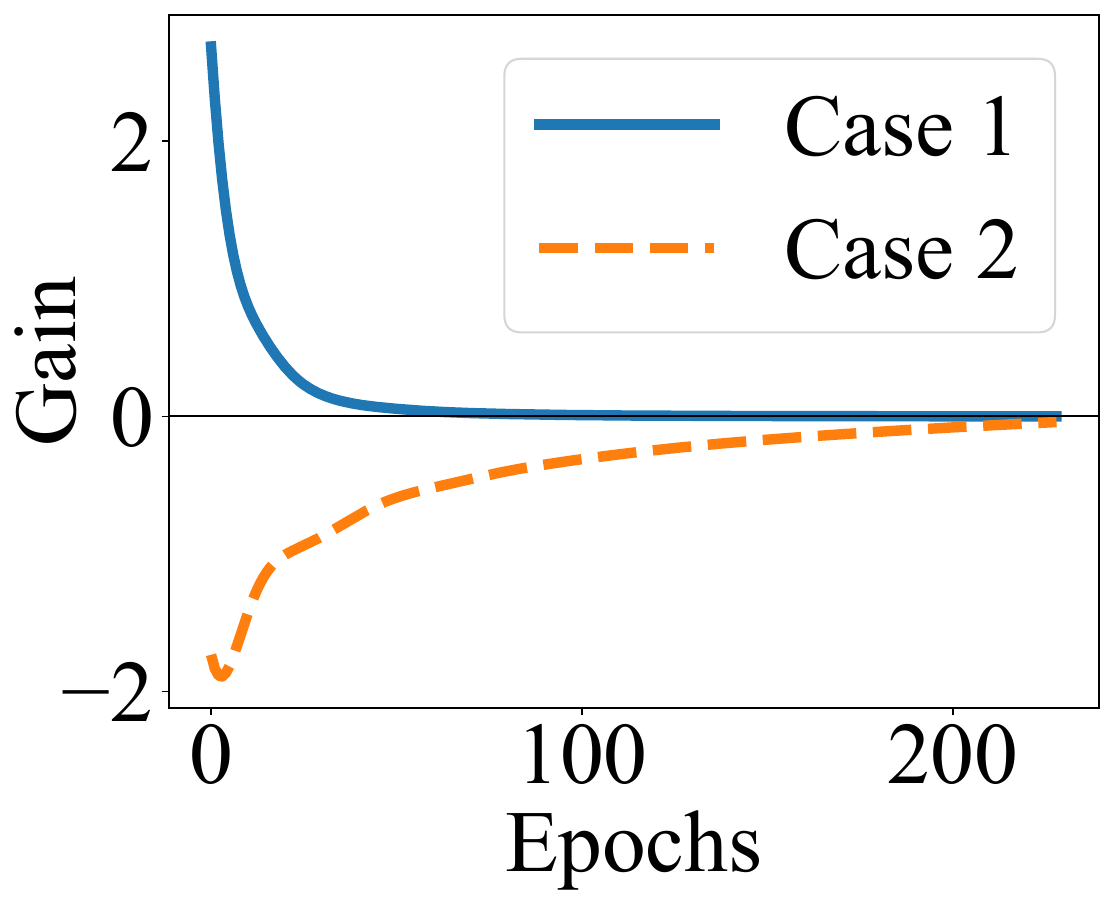}
     \caption{{\sf Gain}}
     \label{fig:case_study_gain}
  \end{subfigure}
  \caption{Disparity ($\mathcal{D}$) and gain ($\mathcal{G}$) scores during model training where Case 1 has no concept drift, and Case 2 does. While $\mathcal{G}$ converges to zero as a converged model has nothing to further gain regardless of $T$'s concept, $\mathcal{D}$ does not as it indicates drift severity.}
\label{fig:casegraph}
\end{figure}

\paragraph{Case 2} If $T$'s concept is 0 (i.e., there is a concept drift), we can show that $\mathcal{D}(T, V) \leq \sqrt{2 (1 + \sigma^2)}$ and $\mathcal{G}(T, V) \geq s_1(s_1-1) + s_2(s_2-1) + \cdots + s_c^{2}$. The derivation is in the appendix. The simulation results in Figure~\ref{fig:case_study_disp} show that the $\mathcal{D}$ score is high compared to non-drift case, indicating a concept drift. In Figure~\ref{fig:case_study_gain}, the $\mathcal{G}$ score is negative and gradually increasing towards zero.

\section{Experiments}
\label{sec:experiments}

We implement \method{} using Python and PyTorch. We evaluate models on separate test sets and repeat all experiments with five different random seeds and write average performances with standard deviations. We use accuracy for evaluation, but also have $F_1$ score results in the appendix. All experiments are run on NVidia Titan RTX GPUs.

\paragraph{Datasets}

We evaluate \method{} on four synthetic and five real datasets. Table~\ref{tbl:datasets} summarizes the datasets with other experimental settings (see more details in the appendix). There are four types of concept drifts depending on how the concept changes: sudden (S), gradual (G), incremental (I), and reoccurring (R). That is, a new concept can suddenly, gradually, or incrementally appear, or an old concept can reoccur. The synthetic datasets are designed to have all the different types of concept drifts\,\cite{DBLP:journals/tkde/LuLDGGZ19}. For some real datasets, there is a mixture of types, which we refer to as ``Complex.''

\begin{table*}[t]
  \caption{For each of the nine datasets (four synthetic and five real datasets), we show the total dataset size, the number of data segments, the number of features, the number of classes, and the drift type.}
  \centering
  \begin{tabular}{cclccccc}
    \toprule
    {\bf Type} & {\bf Dataset} & {\bf Size} & {\bf \#Sgmts} & {\bf \#Ftrs} & {\bf \#Cls} & {\bf Drift Type} \\
    \midrule
    \multirow{4}{*}{Synthetic} & {\sf SEA}\,\cite{DBLP:conf/kdd/StreetK01} & 16K & 8 & 3 & 2 & S, R  \\
    & {\sf Hyperplane}\,\cite{DBLP:conf/kdd/HultenSD01} & 16K & 8 & 10 & 2 & G, I \\
    & {\sf Random RBF}\,\cite{DBLP:journals/jmlr/BifetHKP10} & 16K & 8 & 10 & 5 & S, G, I \\
    & {\sf Sine}\,\cite{DBLP:conf/sbia/GamaMCR04} & 16K & 8 & 4 & 2 & S, R \\
    \midrule
    \multirow{4}{*}{Real} & {\sf Electricity}\,\cite{Harries99splice-2comparative} & 43.2K & 10 & 6 & 2 & Complex \\
    & {\sf Weather}\,\cite{DBLP:journals/tnn/ElwellP11} & 18K & 10 & 8 & 2 & Complex \\
   & {\sf Spam}\,\cite{DBLP:journals/kais/KatakisTV10} & 9.3K & 9 & 499 & 2 & G \\
   & {\sf Usenet1 \& 2}\,\cite{DBLP:conf/ecai/KatakisTV08} & 1.5K & 5 & 99 & 2 & S, R \\
    \bottomrule
  \end{tabular}
  \label{tbl:datasets}
\end{table*}

\paragraph{Model Training}

We train a simple neural network classifier with cross-entropy loss and an Adam optimizer for all the experiments. We use a periodic holdout evaluation method where, if a concept drift occurs, we first wait for a certain number of samples to arrive to train the model and then perform evaluation. The holdout number depends on the dataset, and we set it to be 10--20\% of the size of the average segment size of that dataset. For all the datasets, the holdout number ranges from 60 to 430. We then use half of this data and historical data for the training set, another half for the validation set, and the rest for the test set.
For a fair comparison with other baselines that do not use validation sets, we make them use both our training and validation sets for their training sets.

\paragraph{Baselines}

We compare \method{} with four types of baselines:
\begin{itemize}[leftmargin=*]
    \item {\bf Na\"ive methods}: {\em Full Data} uses all the data segments without any selection and {\em Current Segment} only uses the current segment for training. 
    \item {\bf Model-centric methods}: {\em HAT}\,\cite{DBLP:conf/ida/BifetG09} trains a Hoeffding Adaptive Tree classifier on each sample in an online fashion using the entire data; {\em ARF}\,\cite{DBLP:journals/ml/GomesBRBEPHA17} trains an Adaptive Random Forest classifier on each sample using the entire data; {\em Learn++.NSE}\,\cite{DBLP:journals/tnn/ElwellP11} takes an ensemble of models trained on previous data segments and adjusts their weights depending on their losses on the current data segment; and {\em SEGA}\,\cite{DBLP:journals/tnn/SongLLLZ22} ensembles models trained from equal-length segments of historical data that have the minimum kNN based distributional discrepancy with the current data. We also note that {\em HAT} and {\em ARF} have their own mechanisms for detecting concept drifts.
    \item {\bf Data-centric method}: {\em CVDTE}\,\cite{DBLP:conf/kdd/Fan04} trains a Cross-Validation Decision Tree Ensemble classifier using individual samples that do not have conflicting predictions between shifted decision boundaries due to concept drifts.
    \item {\bf Data Subset Selection methods}: {\em GLISTER}\,\cite{DBLP:conf/aaai/KillamsettySRI21} and {\em GRAD-MATCH}\,\cite{DBLP:conf/icml/KillamsettySRDI21} both train a neural network classifier based on data subset selection methods, but with different criteria. {\em GLISTER} ranks data subsets with gains and selects the top-k subsets with a pre-defined budget. {\em GRAD-MATCH} simultaneously selects data subsets and adjusts their weights to minimize the gradient error. These baselines are not designed to handle concept drifts.
\end{itemize}

\paragraph{Parameters}

For the neural network classifier, we always use one hidden layer with 256 nodes. We set the learning rate to 1e-3 using cross-validation and the number of maximum epochs to 2,000 with early stopping. For each new data segment, we set the disparity threshold using Bayesian optimization with a search interval between (0, 2).

\begin{table*}[t]
  \setlength{\tabcolsep}{4pt}
  \small
  \caption{Accuracy and runtime (sec) results on six datasets. We compare \method{} with all the four types of baselines.} 
  \centering
  \begin{tabular}{l|cccccc|cccccc}
  \toprule
    {Methods} & \multicolumn{2}{c}{\sf SEA} & \multicolumn{2}{c}{\sf Random RBF} & \multicolumn{2}{c|}{\sf Sine} & \multicolumn{2}{c}{\sf Electricity} & \multicolumn{2}{c}{\sf Weather} & \multicolumn{2}{c}{\sf Spam}  \\
    \cmidrule{1-13}
    {} & {Acc.}  & {Time} & {Acc.}  & {Time} & {Acc.}  & {Time} & {Acc.}  & {Time} & {Acc.}  & {Time} & {Acc.}  & {Time} \\
    \midrule
    {Full Data} & {.849}\tiny{$\pm$.005} & {3.36} & {.821}\tiny{$\pm$.007} & {9.43} & {.449}\tiny{$\pm$.032} & {2.25} & {.694}\tiny{$\pm$.010} & {7.42} & {$\mathbf{.800}$}\tiny{$\pm$.005} & {4.33} & {.970}\tiny{$\pm$.003} & {1.17} \\
    {Current Seg.} & {.864}\tiny{$\pm$.004} & {0.20} & {.679}\tiny{$\pm$.010} & {0.56} & {.899}\tiny{$\pm$.004} & {0.94} & {.709}\tiny{$\pm$.009} & {0.52} & {.756}\tiny{$\pm$.007} & {0.26} & {.955}\tiny{$\pm$.003} & {0.16} \\
    \cmidrule{1-13}
    {HAT} & {.825}\tiny{$\pm$.008} & {1.38} & {.514}\tiny{$\pm$.010} & {2.35} & {.293}\tiny{$\pm$.023} & {1.67} & {.691}\tiny{$\pm$.021} & {6.43} & {.729}\tiny{$\pm$.011} & {2.10} & {.888}\tiny{$\pm$.009} & {25.67} \\
    {ARF} & {.825}\tiny{$\pm$.008} & {23.49} & {.645}\tiny{$\pm$.029} & {44.64} & {.821}\tiny{$\pm$.050} & {21.40} & {.713}\tiny{$\pm$.011} & {57.36} & {.775}\tiny{$\pm$.008} & {30.51} & {.921}\tiny{$\pm$.012} & {44.83} \\
    {Learn++.NSE} & {.804}\tiny{$\pm$.005} & {6.65} & {.611}\tiny{$\pm$.009} & {5.68} & {.925}\tiny{$\pm$.004} & {5.73} & {.698}\tiny{$\pm$.004} & {17.26} & {.703}\tiny{$\pm$.007} & {7.86} & {.928}\tiny{$\pm$.006} & {3.81} \\
    {SEGA} & {.797}\tiny{$\pm$.000} & {4.37} & {.825}\tiny{$\pm$.000} & {4.47} & {.253}\tiny{$\pm$.000} & {4.35} & {.637}\tiny{$\pm$.000} & {10.26} & {.777}\tiny{$\pm$.000} & {4.11} & {.858}\tiny{$\pm$.000} & {6.67} \\
    \cmidrule{1-13}
    {CVDTE} & {.806}\tiny{$\pm$.018} & {0.02} & {.614}\tiny{$\pm$.015} & {0.05} & {.857}\tiny{$\pm$.004} & {0.02} & {.689}\tiny{$\pm$.010} & {0.04} & {.731}\tiny{$\pm$.009} & {0.03} & {.917}\tiny{$\pm$.009} & {0.12} \\
    \cmidrule{1-13}
    {GLISTER} & {.857}\tiny{$\pm$.008} & {25.89} & {.794}\tiny{$\pm$.014} & {63.73} & {.879}\tiny{$\pm$.013} & {14.93} & {.698}\tiny{$\pm$.014} & {77.46} & {.793}\tiny{$\pm$.010} & {40.79} & {.971}\tiny{$\pm$.005} & {14.52} \\
    {GRAD-MATCH} & {.853}\tiny{$\pm$.009} & {2.13} & {.790}\tiny{$\pm$.013} & {6.66} & {.547}\tiny{$\pm$.084} & {0.80} & {.686}\tiny{$\pm$.015} & {5.97} & {.795}\tiny{$\pm$.008} & {3.51} & {.968}\tiny{$\pm$.004} & {1.13} \\
    \cmidrule{1-13}
    {\bf \method{}} & {$\mathbf{.888}$}\tiny{$\pm$.004} & {2.20} & {$\mathbf{.833}$}\tiny{$\pm$.008} & {3.22} & {$\mathbf{.936}$}\tiny{$\pm$.005} & {4.88} & {$\mathbf{.728}$}\tiny{$\pm$.007} & {5.24} & {.796}\tiny{$\pm$.004} & {2.00} & {$\mathbf{.974}$}\tiny{$\pm$.003} & {2.59} \\
    \bottomrule
  \end{tabular}
  \label{tbl:performance_representative}
\end{table*}

\subsection{Accuracy and Runtime Results}
\label{sec:accuracyandruntime}

For each dataset, we evaluate \method{}'s accuracy and runtime results for each incoming data segment by setting it as the current segment and then take an average of performances on all the segments. We compare \method{} with the other baselines on six of the datasets as shown in Table~\ref{tbl:performance_representative}. The results for the other three datasets are similar and shown in the appendix. Overall, \method{} outperforms all the baselines in terms of accuracy because it effectively utilizes the drifted data. In comparison, {\em Full Data} is forced to use all the drifted data, while {\em Current Segment} cannot utilize any historical data. {\em HAT} performs worse than \method{} because it adaptively learns the recent data without using previous models or data. The three ensemble methods {\em ARF}, {\em Learn++.NSE}, and {\em SEGA} also perform worse. {\em ARF} can lose useful previous knowledge while replacing an obsolete tree for drift adaptation. Although {\em Learn++.NSE} and {\em SEGA} save all or a buffer's worth of past models and uses the current data segment to ensemble them, the models trained from previous data segments have limitations in fitting to the current data segment with simple ensemble techniques. This result shows the advantage of taking a data-centric approach that adaptively selects suitable data segments with explicit model evaluations. {\em CVDTE} performs worse than \method{} because it simply collects samples that do not have conflicting predictions, regardless of whether they actually benefit model accuracy. For the data subset selection baselines, {\em GLISTER}'s sample-based selection shows more robust results than {\em GRAD-MATCH}'s random batch selection, but the computation time is significantly greater.

\subsection{Data Segment Selection Analysis}
\label{sec:selectionanalysis}

We next verify whether our data segment selection algorithm actually finds core data segments and discards drifted data segments. For each incoming data segment, we collect all the data segments $S$ that have been selected during the entire training at least once as they all influence the final trained model. In addition, we construct a gold standard solution $G$ by performing an exhaustive evaluation where we evaluate all possible combinations of data segments and choose the one that results in the highest accuracy on the validation set. We then measure the precision and recall of $S$ against $G$. Finally, we take the average of the precision and recall values for all the segments. We refer to the exhaustive searching method as {\em Best Segments}. 

Table~\ref{tbl:DADSS_table} shows the average precision and recall results on the four synthetic datasets, each of which has 8 data segments. An average recall value of 0.96--1.00 suggests that \method{} selects almost all of the useful data segments. The average precision is between 0.75--0.98 because \method{} selects some more similar data segments as well. Interestingly, we observe that these extra segments sometimes improve the model accuracy on the test set as shown in Figure~\ref{fig:selection_analysis}, which can happen because the gold standards are selected using the validation set results only. Here we compare \method{} with the {\em Full Data}, {\em Current Segment}, and {\em Best Segments}. As a result, \method{} shows competitive results with {\em Best Segments} and sometimes outperforms it. We suspect that \method{}'s training benefits from the extra data segments during some of its epochs, which improves the model generalization. Hence, \method{} actually achieves most of the room for improvements compared to optimal solutions.

\begin{table}[t]
  \setlength{\tabcolsep}{5pt}
  \caption{Comparison of \method{}'s selected data segments against the Best Segments results on the synthetic datasets.}
  \centering
  \begin{tabular}{ccccccccc}
    \toprule
    Metrics & {\sf SEA} & {\sf Hyperplane} & {\sf Random RBF} & {\sf Sine} \\
    \midrule
    Precision & 0.80 & 0.75 & 0.78 & 0.98 \\
    Recall & 0.98 & 1.00 & 0.96 & 1.00 \\
    \bottomrule
  \end{tabular}
  \label{tbl:DADSS_table}
\end{table}

\begin{figure}[t]
  \centering
  \begin{subfigure}{0.95\columnwidth}
     \centering
     \includegraphics[width=\columnwidth]{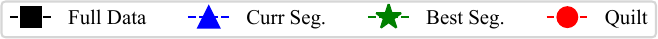}
     \label{fig:legend}
 \end{subfigure}
  \begin{subfigure}{0.461\columnwidth}
     \centering
     \includegraphics[width=\columnwidth]{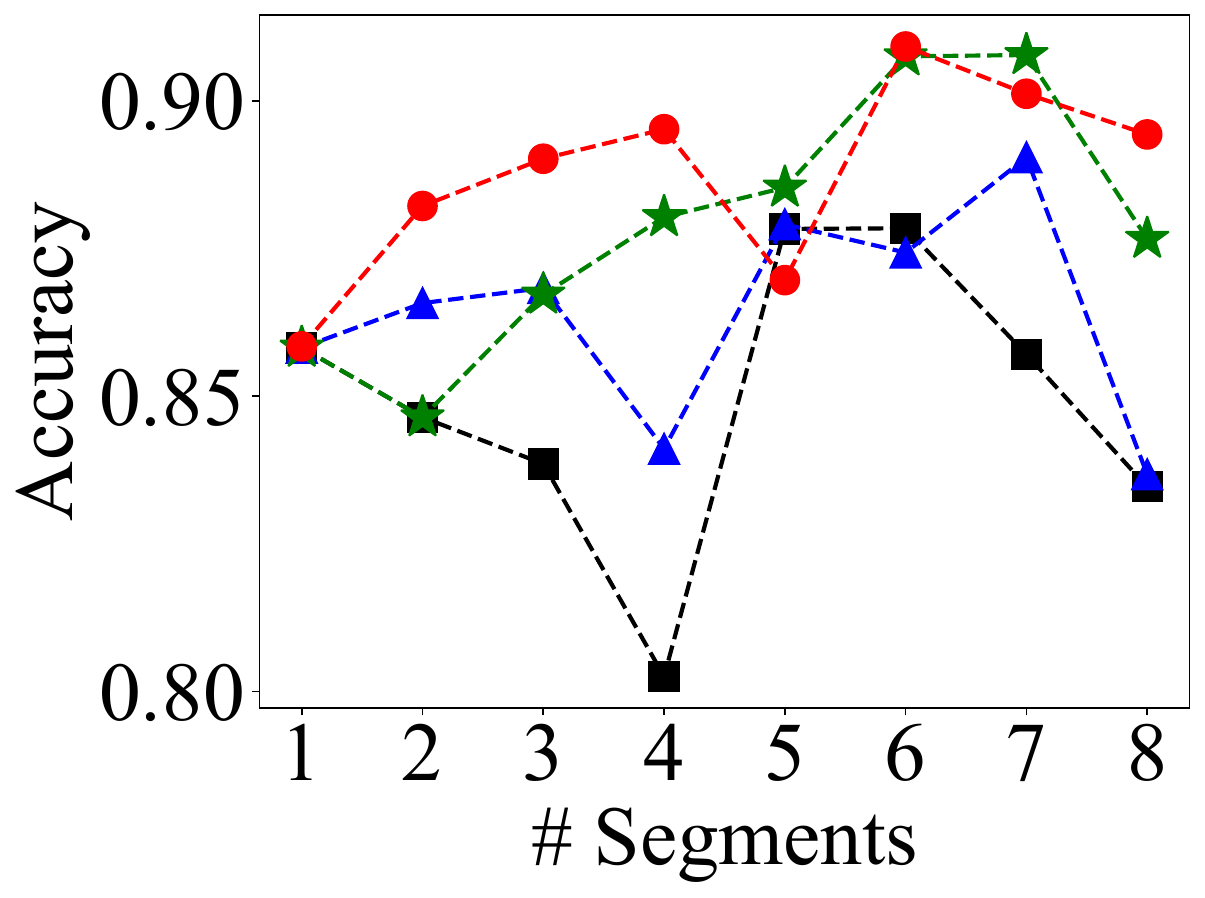}
     \caption{{\sf SEA}}
     \label{fig:SEA}
 \end{subfigure}
 \begin{subfigure}{0.461\columnwidth}
     \centering
     \includegraphics[width=\columnwidth]{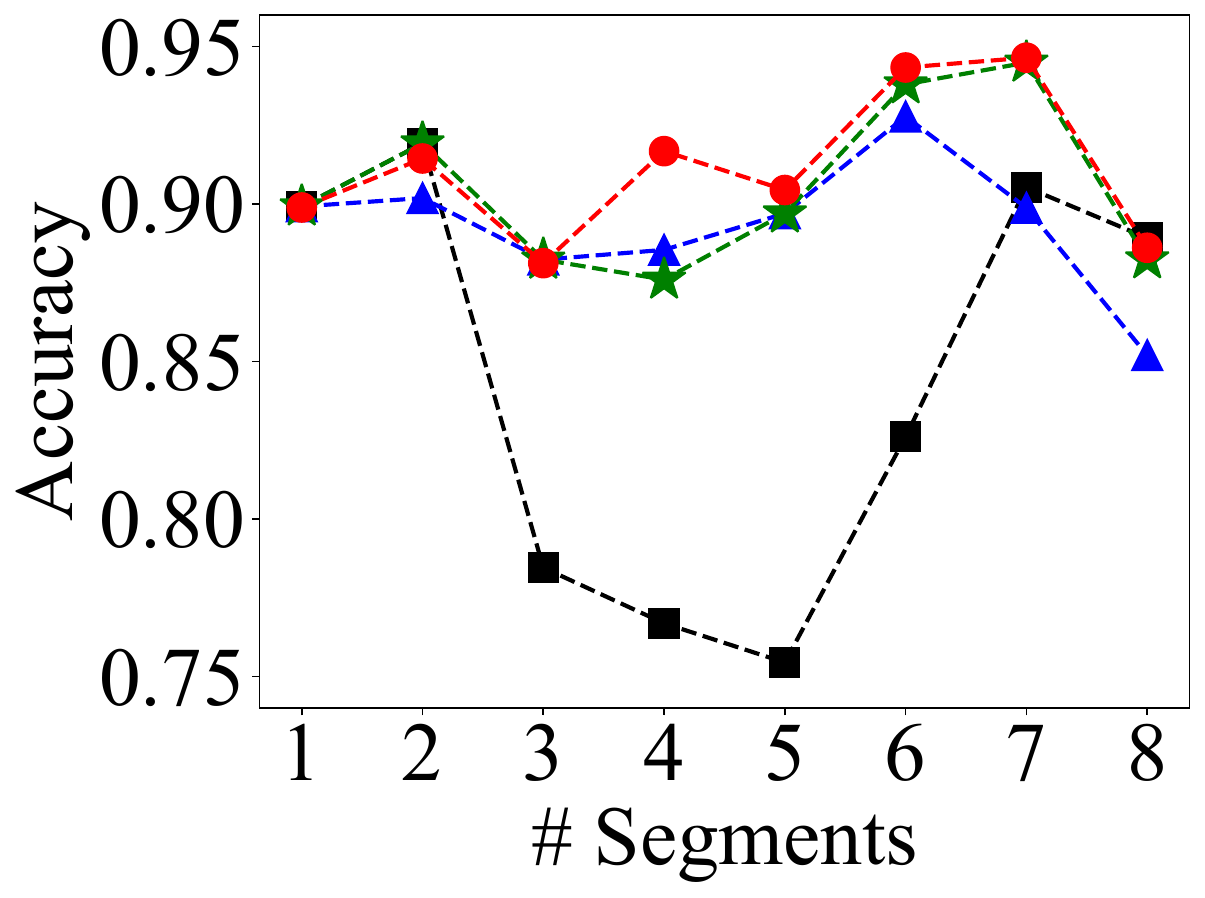}
     \caption{{\sf Hyperplane}}
     \label{fig:Hyperplane}
 \end{subfigure}
 \begin{subfigure}{0.461\columnwidth}
     \centering
     \includegraphics[width=\columnwidth]{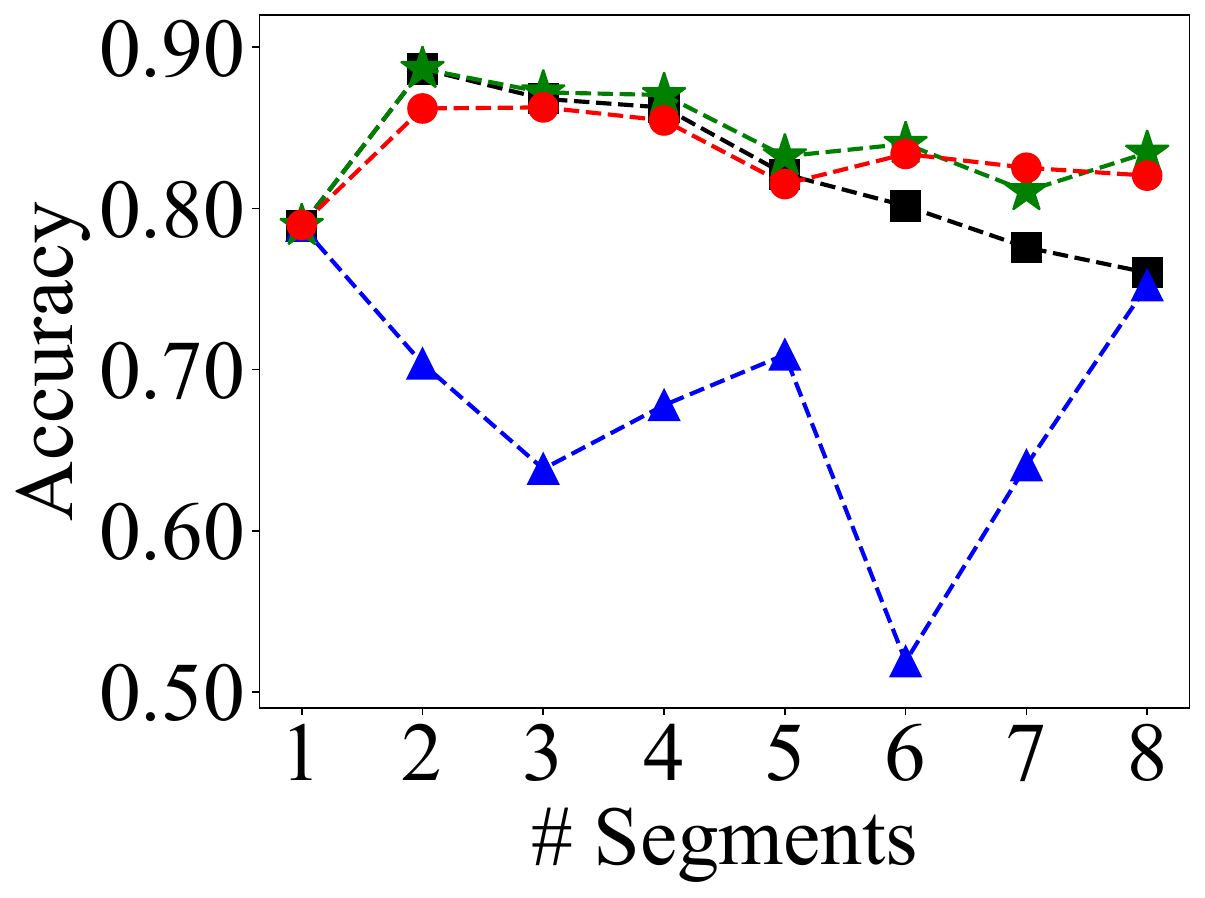}
     \caption{{\sf Random RBF}}
     \label{fig:Random_RBF}
 \end{subfigure}
 \begin{subfigure}{0.461\columnwidth}
     \centering
     \includegraphics[width=\columnwidth]{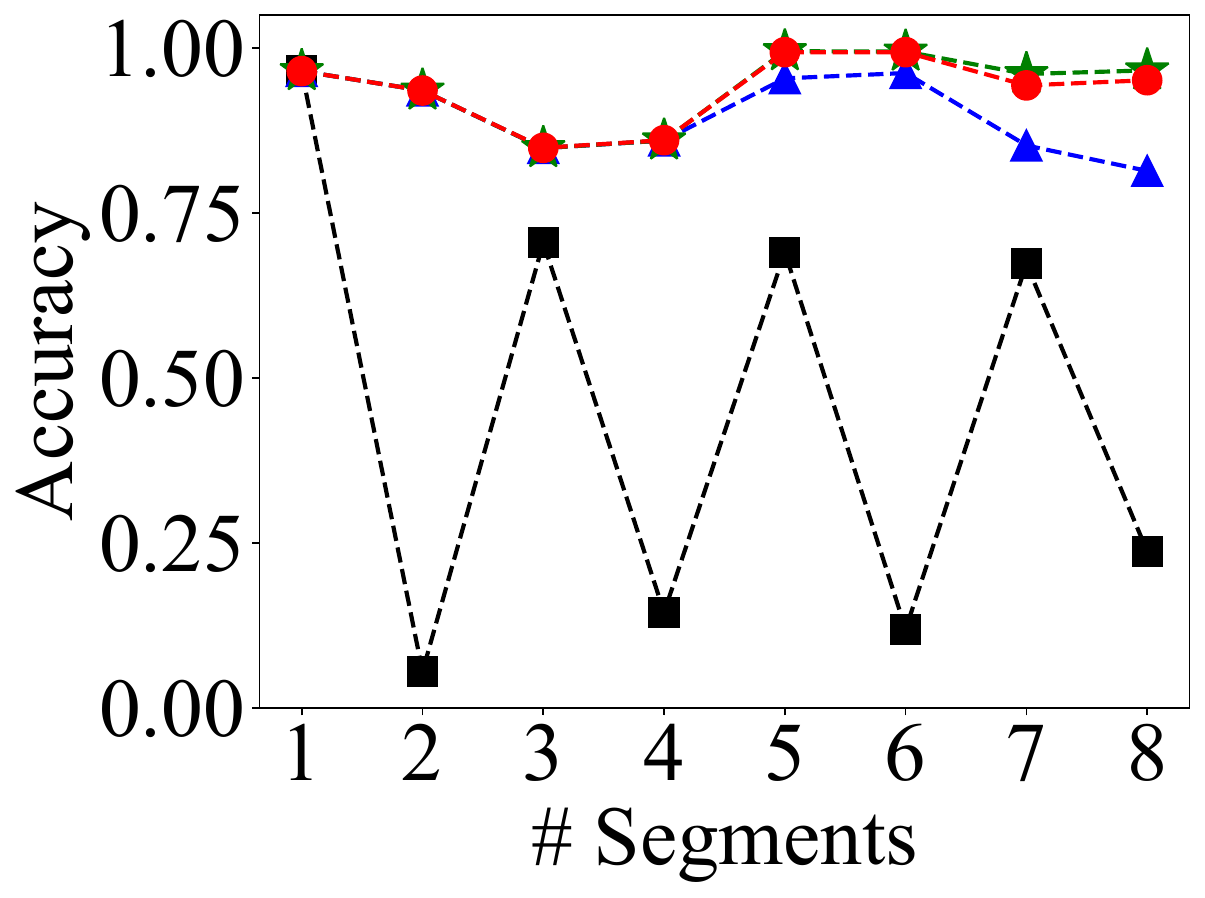}
     \caption{{\sf Sine}}
     \label{fig:Sine}
 \end{subfigure}
 \caption{Accumulative model evaluation results against incoming data segments on the four synthetic datasets.}
 \label{fig:selection_analysis}
\end{figure}

\subsection{Ablation Study}
\label{sec:ablation}

We perform an ablation study of \method{} to see how the two gradient-based scores contribute to the overall performance. Table~\ref{tbl:variations_six} evaluates \method{} variants when not using one or both scores for six datasets. The results for the other datasets are similar and shown in the appendix. We compare the accuracy, runtime, the speedup compared to when not using the scores, and the portion of data segments selected (Usage). As a result, removing the disparity score ($\mathcal{D}$) leads to worse accuracy for datasets with severe drifts as \method{} is not able to discard drifted segments. When removing the gain score ($\mathcal{G}$), the accuracy does not decrease significantly, which is expected, but the runtime worsens because the model training is performed on more data. Removing both scores leads to inaccurate and slow results. For example, on the {\sf Random RBF} dataset, \method{} is comparable or more accurate than any variant while being 5.1x faster by only using 39.4\% of the data segments. While there are some exceptions, \method{} largely provides these benefits for all datasets. Thus, both scores are necessary to obtain both accuracy and efficiency.

\begin{table}[t]
  \setlength{\tabcolsep}{3.6pt}
  \small
  \caption{Accuracy, runtime (sec), speedup, and data segment usage results of \method{} when not using one or both scores.} 
  \centering
  \begin{tabular}{clccc}
  \toprule
    {Datasets} & {Methods} & {Acc.} & {Time (Speedup)} & {Usage} \\
    \midrule
    \multirow{4}{*}{\makecell{\sf SEA}} & {W/o both} & {.850\tiny{$\pm$.005}} & {4.49 (1.0$\times$)} & {100.0\%} \\ 
    & {W/o $\mathcal{D}$} & {.890\tiny{$\pm$.004}} & {2.57 (1.7$\times$)} & {42.2\%} \\
    & {W/o $\mathcal{G}$} & {.881\tiny{$\pm$.007}} & {3.06 (1.5$\times$)} & {49.6\%} \\ 
    & {\method{}} & {.888\tiny{$\pm$.004}} & {2.20 (2.0$\times$)} & {30.8\%} \\
    \midrule
    \multirow{4}{*}{\makecell{{\sf Random}\\{\sf RBF}}} & {W/o both} & {.822\tiny{$\pm$.007}} & {16.51 (1.0$\times$)} & {100.0\%} \\ 
    & {W/o $\mathcal{D}$} & {.828\tiny{$\pm$.008}} & {4.32 (3.8$\times$)} & {45.3\%} \\
    & {W/o $\mathcal{G}$} & {.829\tiny{$\pm$.011}} & {9.69 (1.7$\times$)} & {79.7\%} \\ 
    & {\method{}} & {.833\tiny{$\pm$.008}} & {3.22 (5.1$\times$)} & {39.4\%} \\
    \midrule
    \multirow{4}{*}{\makecell{\sf Sine}} & {W/o both} & {.444\tiny{$\pm$.032}} & {3.43 (1.0$\times$)} & {100.0\%} \\ 
    & {W/o $\mathcal{D}$} & {.890\tiny{$\pm$.015}} & {2.75 (1.2$\times$)} & {36.5\%} \\
    & {W/o $\mathcal{G}$} & {.941\tiny{$\pm$.003}} & {7.77 (0.4$\times$)} & {23.2\%} \\ 
    & {\method{}} & {.936\tiny{$\pm$.005}} & {4.88 (0.7$\times$)} & {21.1\%} \\
    \midrule
    \multirow{4}{*}{\makecell{\sf Electricity}} & {W/o both} & {.696\tiny{$\pm$.009}} & {10.71 (1.0$\times$)} & {100.0\%} \\ 
    & {W/o $\mathcal{D}$} & {.711\tiny{$\pm$.013}} & {6.57 (1.6$\times$)} & {52.9\%} \\
    & {W/o $\mathcal{G}$} & {.723\tiny{$\pm$.009}} & {6.68 (1.6$\times$)} & {39.6\%} \\ 
    & {\method{}} & {.728\tiny{$\pm$.007}} & {5.24 (2.0$\times$)} & {27.9\%} \\
    \midrule
    \multirow{4}{*}{\makecell{\sf Weather}} & {W/o both} & {.798\tiny{$\pm$.006}} & {7.67 (1.0$\times$)} & {100.0\%} \\ 
    & {W/o $\mathcal{D}$} & {.794\tiny{$\pm$.004}} & {2.19 (3.5$\times$)} & {40.9\%} \\
    & {W/o $\mathcal{G}$} & {.800\tiny{$\pm$.006}} & {6.85 (1.1$\times$)} & {75.5\%} \\ 
    & {\method{}} & {.796\tiny{$\pm$.004}} & {2.00 (3.8$\times$)} & {30.7\%} \\
    \midrule
    \multirow{4}{*}{\makecell{\sf Spam}} & {W/o both} & {.970\tiny{$\pm$.003}} & {4.51 (1.0$\times$)} & {100.0\%} \\  
    & {W/o $\mathcal{D}$} & {.973\tiny{$\pm$.004}} & {2.31 (2.0$\times$)} & {52.1\%} \\
    & {W/o $\mathcal{G}$} & {.972\tiny{$\pm$.002}} & {4.02 (1.1$\times$)} & {60.5\%} \\ 
    & {\method{}} & {.974\tiny{$\pm$.003}} & {2.59 (1.7$\times$)} & {39.7\%} \\
  \bottomrule
  \end{tabular}
  \label{tbl:variations_six}
\end{table}

\subsection{Scalability}
\label{sec:scalability}

\begin{figure}[t]
 \centering
 \begin{subfigure}{1\columnwidth}
    \centering
    \includegraphics[width=\columnwidth]{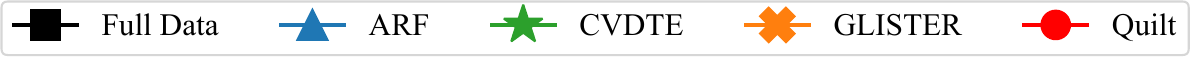}
 \label{fig:legend_scalability}
 \end{subfigure}
 \begin{subfigure}{1\columnwidth}
 \centering
    \includegraphics[width=\columnwidth]{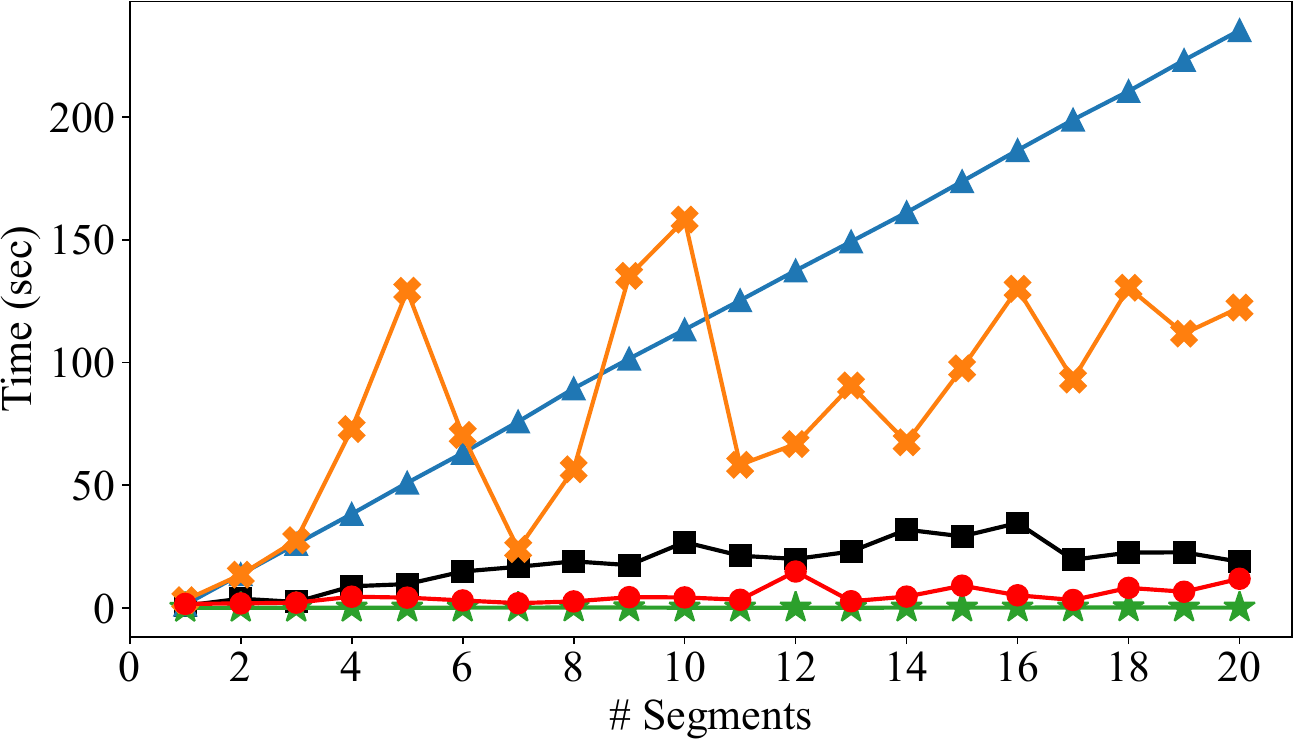}
 \label{fig:scalability_graph}
 \end{subfigure}
 \vspace{-0.4cm}
 \caption{Scalability results of \method{} and baselines on the expanded {\sf Random RBF} dataset.}
 \label{fig:scalability}
\end{figure}

We show the runtimes of \method{} against the number of data segments on the {\sf Random RBF} dataset in Figure~\ref{fig:scalability} where we expand the dataset to have 20 segments from 8. As in the accumulative evaluation setting, we assume incoming data segments and measure the runtime for each data segment when setting it to the current data segment.  As a result, \method{} scales much better than most baselines because it is able to select small core subsets of segments. Another observation is that adding a data segment may sometimes reduce the overall runtime, which can happen if a new data segment has a smaller core subset than that of the preceding segment.

\section{Related Work}
\label{sec:relatedwork}

\paragraph{Drift Detection} Drift detection\,\cite{DBLP:journals/tkde/LuLDGGZ19,DBLP:journals/datamine/WebbHCNP16,DBLP:journals/inffus/KrawczykMGSW17,gama2014survey} techniques have been proposed to quantify concept drifts by identifying change points or change time intervals\,\cite{basseville1993detection}. Drift detection techniques can be largely divided into supervised (e.g., DDM\,\cite{DBLP:conf/sbia/GamaMCR04}, EDDM\,\cite{baena2006early}, and ADWIN\,\cite{bifet2007learning}) and unsupervised (e.g., HDDDM\,\cite{ditzler2011hellinger} and DAWIDD\,\cite{hinder2020towards}) detection techniques depending on whether a trained model is used. In comparison, \method{} is an orthogonal technique where it takes any input of data segments that can be produced from any of these techniques.

\paragraph{Concept Drift Adaptation}

Model-centric concept drift adaptation techniques focus on efficiently updating the given model on new data, and previous drifted data is discarded. We categorize these methods as follows: simple re-training of the model; using ensemble models\,\cite{DBLP:journals/ml/GomesBRBEPHA17,DBLP:journals/tnn/ElwellP11,DBLP:journals/tnn/SongLLLZ22}; and adjusting existing models\,\cite{DBLP:conf/ida/BifetG09,DBLP:conf/kdd/DomingosH00,DBLP:conf/kdd/GamaRM03,DBLP:conf/kdd/HultenSD01} depending on the characteristics of the concept drift. Another categorization\,\cite{DBLP:journals/tnn/ElwellP11} can be done as follows: online versus batch processing depending on how many samples are used for each model training; incremental versus non-incremental based on whether previous data is re-used; and active versus passive whether drift detection is explicitly performed or drifts are assumed to occur at any time. In comparison, \method{} does not discard previous data and carefully utilizes it.

Data-centric approaches for concept drift adaptation have been proposed as well. Data reduction techniques\,\cite{DBLP:journals/ijon/Ramirez-Gallego17} clean the data by deleting redundant or noisy samples and features. Drift understanding techniques\,\cite{DBLP:journals/tcyb/DongLSLZ22} have been proposed where the newest data segment is used as a pattern to filter out obsolete data. 
The filtering is performed based on comparing the cumulative distribution functions of data, and samples that are filtered out are never re-selected even if they could be beneficial later. The most relevant technique to ours is {\em CVDTE}\,\cite{DBLP:conf/kdd/Fan04}, a sample selection technique where given previous and current models, the goal is to select the samples that do not have conflicting predictions. However, all these techniques have the fundamental limitation that there is no way to see if the data preprocessing results actually improve the model accuracy. In comparison, \method{} takes a more data-driven approach of explicitly evaluating models on selected data segments, while minimizing the computation cost using gradient-based disparity and gain scores.

\section{Conclusion}

We proposed \method{}, a data segment selection framework that is robust against concept drifts. Unlike traditional concept drift adaptation frameworks, we take a data-centric approach by directly selecting segments of the training data in a stream that most effectively improves the model accuracy. We design an efficient data subset selection algorithm that selects core data segments while discarding those with concept drifts. The key approach is to use gradient-based disparity and gain scores, which can be used to identify drifts and select useful segments, respectively, and have low computation overheads. Our experiments show how \method{} significantly outperforms both concept drift adaptation and data selection baselines on synthetic and real datasets.

\section*{Acknowledgments}
This work was supported by SK Hynix AICC (K20.05), by the Institute of Information \& Communications Technology Planning \& Evaluation(IITP) grant funded by the Korea government(MSIT) (No.\@ 2022-0-00157, Robust, Fair, Extensible Data-Centric Continual Learning), and by the National Research Foundation of Korea(NRF) grant funded by the Korea government(MSIT) (No.\@ NRF-2022R1A2C2004382). Steven E. Whang is the corresponding author.

\bibliography{main}

\begin{thebibliography}{32}
\providecommand{\natexlab}[1]{#1}

\bibitem[{Ash et~al.(2020)Ash, Zhang, Krishnamurthy, Langford, and
  Agarwal}]{DBLP:conf/iclr/AshZK0A20}
Ash, J.~T.; Zhang, C.; Krishnamurthy, A.; Langford, J.; and Agarwal, A. 2020.
\newblock Deep Batch Active Learning by Diverse, Uncertain Gradient Lower
  Bounds.
\newblock In \emph{ICLR}.

\bibitem[{Baena-Garc{\i}a et~al.(2006)Baena-Garc{\i}a, del Campo-{\'A}vila,
  Fidalgo, Bifet, Gavalda, and Morales-Bueno}]{baena2006early}
Baena-Garc{\i}a, M.; del Campo-{\'A}vila, J.; Fidalgo, R.; Bifet, A.; Gavalda,
  R.; and Morales-Bueno, R. 2006.
\newblock Early drift detection method.
\newblock In \emph{Fourth international workshop on knowledge discovery from
  data streams}, volume~6, 77--86. Citeseer.

\bibitem[{Basseville and Nikiforov(1993)}]{basseville1993detection}
Basseville, M.; and Nikiforov, I.~V. 1993.
\newblock \emph{Detection of Abrupt Changes: Theory and Application}.
\newblock ISBN 0131267809.

\bibitem[{Bifet and Gavald{\`{a}}(2007)}]{bifet2007learning}
Bifet, A.; and Gavald{\`{a}}, R. 2007.
\newblock Learning from Time-Changing Data with Adaptive Windowing.
\newblock In \emph{ICDM}, 443--448.

\bibitem[{Bifet and Gavald{\`{a}}(2009)}]{DBLP:conf/ida/BifetG09}
Bifet, A.; and Gavald{\`{a}}, R. 2009.
\newblock Adaptive Learning from Evolving Data Streams.
\newblock In \emph{IDA}, volume 5772, 249--260.

\bibitem[{Bifet et~al.(2010)Bifet, Holmes, Kirkby, and
  Pfahringer}]{DBLP:journals/jmlr/BifetHKP10}
Bifet, A.; Holmes, G.; Kirkby, R.; and Pfahringer, B. 2010.
\newblock {MOA:} Massive Online Analysis.
\newblock \emph{J. Mach. Learn. Res.}, 11: 1601--1604.

\bibitem[{Ditzler and Polikar(2011)}]{ditzler2011hellinger}
Ditzler, G.; and Polikar, R. 2011.
\newblock Hellinger distance based drift detection for nonstationary
  environments.
\newblock In \emph{IEEE CIDUE}, 41--48.

\bibitem[{Domingos and Hulten(2000)}]{DBLP:conf/kdd/DomingosH00}
Domingos, P.~M.; and Hulten, G. 2000.
\newblock Mining high-speed data streams.
\newblock In \emph{KDD}, 71--80.

\bibitem[{Dong et~al.(2022)Dong, Lu, Song, Liu, and
  Zhang}]{DBLP:journals/tcyb/DongLSLZ22}
Dong, F.; Lu, J.; Song, Y.; Liu, F.; and Zhang, G. 2022.
\newblock A Drift Region-Based Data Sample Filtering Method.
\newblock \emph{{IEEE} Trans. Cybern.}, 52(9): 9377--9390.

\bibitem[{Elwell and Polikar(2011)}]{DBLP:journals/tnn/ElwellP11}
Elwell, R.; and Polikar, R. 2011.
\newblock Incremental Learning of Concept Drift in Nonstationary Environments.
\newblock \emph{{IEEE} Trans. Neural Networks}, 22(10): 1517--1531.

\bibitem[{Fan(2004)}]{DBLP:conf/kdd/Fan04}
Fan, W. 2004.
\newblock Systematic data selection to mine concept-drifting data streams.
\newblock In \emph{KDD}, 128--137.

\bibitem[{Gama et~al.(2004)Gama, Medas, Castillo, and
  Rodrigues}]{DBLP:conf/sbia/GamaMCR04}
Gama, J.; Medas, P.; Castillo, G.; and Rodrigues, P.~P. 2004.
\newblock Learning with Drift Detection.
\newblock In \emph{SBIA}, volume 3171, 286--295.

\bibitem[{Gama, Rocha, and Medas(2003)}]{DBLP:conf/kdd/GamaRM03}
Gama, J.; Rocha, R.; and Medas, P. 2003.
\newblock Accurate decision trees for mining high-speed data streams.
\newblock In \emph{KDD}, 523--528.

\bibitem[{Gama et~al.(2014)Gama, Zliobaite, Bifet, Pechenizkiy, and
  Bouchachia}]{gama2014survey}
Gama, J.; Zliobaite, I.; Bifet, A.; Pechenizkiy, M.; and Bouchachia, A. 2014.
\newblock A survey on concept drift adaptation.
\newblock \emph{{ACM} Comput. Surv.}, 46(4): 44:1--44:37.

\bibitem[{Goldenberg and Webb(2019)}]{DBLP:journals/kais/GoldenbergW19}
Goldenberg, I.; and Webb, G.~I. 2019.
\newblock Survey of distance measures for quantifying concept drift and shift
  in numeric data.
\newblock \emph{Knowl. Inf. Syst.}, 60(2): 591--615.

\bibitem[{Gomes et~al.(2017)Gomes, Bifet, Read, Barddal, Enembreck, Pfahringer,
  Holmes, and Abdessalem}]{DBLP:journals/ml/GomesBRBEPHA17}
Gomes, H.~M.; Bifet, A.; Read, J.; Barddal, J.~P.; Enembreck, F.; Pfahringer,
  B.; Holmes, G.; and Abdessalem, T. 2017.
\newblock Adaptive random forests for evolving data stream classification.
\newblock \emph{Mach. Learn.}, 106(9-10): 1469--1495.

\bibitem[{Harries(1999)}]{Harries99splice-2comparative}
Harries, M. 1999.
\newblock SPLICE-2 Comparative Evaluation: Electricity Pricing.
\newblock Technical report.

\bibitem[{Hinder et~al.(2020)Hinder, Artelt, Hammer, and
  Ham}]{hinder2020towards}
Hinder, F.; Artelt, A.; Hammer, B.; and Ham, B. 2020.
\newblock Towards non-parametric drift detection via Dynamic Adapting Window
  Independence Drift Detection (DAWIDD).
\newblock In \emph{ICML}, volume 119, 4249--4259.

\bibitem[{Hulten, Spencer, and Domingos(2001)}]{DBLP:conf/kdd/HultenSD01}
Hulten, G.; Spencer, L.; and Domingos, P.~M. 2001.
\newblock Mining time-changing data streams.
\newblock In \emph{KDD}, 97--106.

\bibitem[{Katakis, Tsoumakas, and Vlahavas(2008)}]{DBLP:conf/ecai/KatakisTV08}
Katakis, I.; Tsoumakas, G.; and Vlahavas, I.~P. 2008.
\newblock An Ensemble of Classifiers for coping with Recurring Contexts in Data
  Streams.
\newblock In \emph{ECAI}, volume 178, 763--764.

\bibitem[{Katakis, Tsoumakas, and
  Vlahavas(2010)}]{DBLP:journals/kais/KatakisTV10}
Katakis, I.; Tsoumakas, G.; and Vlahavas, I.~P. 2010.
\newblock Tracking recurring contexts using ensemble classifiers: an
  application to email filtering.
\newblock \emph{Knowl. Inf. Syst.}, 22(3): 371--391.

\bibitem[{Katharopoulos and Fleuret(2018)}]{DBLP:conf/icml/KatharopoulosF18}
Katharopoulos, A.; and Fleuret, F. 2018.
\newblock Not All Samples Are Created Equal: Deep Learning with Importance
  Sampling.
\newblock In \emph{ICML}, volume~80, 2530--2539.

\bibitem[{Killamsetty et~al.(2021{\natexlab{a}})Killamsetty, Sivasubramanian,
  Ramakrishnan, De, and Iyer}]{DBLP:conf/icml/KillamsettySRDI21}
Killamsetty, K.; Sivasubramanian, D.; Ramakrishnan, G.; De, A.; and Iyer, R.~K.
  2021{\natexlab{a}}.
\newblock {GRAD-MATCH:} Gradient Matching based Data Subset Selection for
  Efficient Deep Model Training.
\newblock In \emph{ICML}, volume 139, 5464--5474.

\bibitem[{Killamsetty et~al.(2021{\natexlab{b}})Killamsetty, Sivasubramanian,
  Ramakrishnan, and Iyer}]{DBLP:conf/aaai/KillamsettySRI21}
Killamsetty, K.; Sivasubramanian, D.; Ramakrishnan, G.; and Iyer, R.~K.
  2021{\natexlab{b}}.
\newblock {GLISTER:} Generalization based Data Subset Selection for Efficient
  and Robust Learning.
\newblock In \emph{AAAI}, 8110--8118.

\bibitem[{Krawczyk et~al.(2017)Krawczyk, Minku, Gama, Stefanowski, and
  Wozniak}]{DBLP:journals/inffus/KrawczykMGSW17}
Krawczyk, B.; Minku, L.~L.; Gama, J.; Stefanowski, J.; and Wozniak, M. 2017.
\newblock Ensemble learning for data stream analysis: {A} survey.
\newblock \emph{Inf. Fusion}, 37: 132--156.

\bibitem[{Lu et~al.(2019)Lu, Liu, Dong, Gu, Gama, and
  Zhang}]{DBLP:journals/tkde/LuLDGGZ19}
Lu, J.; Liu, A.; Dong, F.; Gu, F.; Gama, J.; and Zhang, G. 2019.
\newblock Learning under Concept Drift: {A} Review.
\newblock \emph{{IEEE} Trans. Knowl. Data Eng.}, 31(12): 2346--2363.

\bibitem[{Minku, White, and Yao(2010)}]{DBLP:journals/tkde/MinkuWY10}
Minku, L.~L.; White, A.~P.; and Yao, X. 2010.
\newblock The Impact of Diversity on Online Ensemble Learning in the Presence
  of Concept Drift.
\newblock \emph{{IEEE} Trans. Knowl. Data Eng.}, 22(5): 730--742.

\bibitem[{Mirzasoleiman, Bilmes, and
  Leskovec(2020)}]{DBLP:conf/icml/MirzasoleimanBL20}
Mirzasoleiman, B.; Bilmes, J.~A.; and Leskovec, J. 2020.
\newblock Coresets for Data-efficient Training of Machine Learning Models.
\newblock In \emph{ICML}, volume 119, 6950--6960.

\bibitem[{Ram{\'{\i}}rez{-}Gallego et~al.(2017)Ram{\'{\i}}rez{-}Gallego,
  Krawczyk, Garc{\'{\i}}a, Wozniak, and
  Herrera}]{DBLP:journals/ijon/Ramirez-Gallego17}
Ram{\'{\i}}rez{-}Gallego, S.; Krawczyk, B.; Garc{\'{\i}}a, S.; Wozniak, M.; and
  Herrera, F. 2017.
\newblock A survey on data preprocessing for data stream mining: Current status
  and future directions.
\newblock \emph{Neurocomputing}, 239: 39--57.

\bibitem[{Song et~al.(2022)Song, Lu, Liu, Lu, and
  Zhang}]{DBLP:journals/tnn/SongLLLZ22}
Song, Y.; Lu, J.; Liu, A.; Lu, H.; and Zhang, G. 2022.
\newblock A Segment-Based Drift Adaptation Method for Data Streams.
\newblock \emph{{IEEE} Trans. Neural Networks Learn. Syst.}, 33(9): 4876--4889.

\bibitem[{Street and Kim(2001)}]{DBLP:conf/kdd/StreetK01}
Street, W.~N.; and Kim, Y. 2001.
\newblock A streaming ensemble algorithm {(SEA)} for large-scale
  classification.
\newblock In \emph{KDD}, 377--382.

\bibitem[{Webb et~al.(2016)Webb, Hyde, Cao, Nguyen, and
  Petitjean}]{DBLP:journals/datamine/WebbHCNP16}
Webb, G.~I.; Hyde, R.; Cao, H.; Nguyen, H.; and Petitjean, F. 2016.
\newblock Characterizing concept drift.
\newblock \emph{Data Min. Knowl. Discov.}, 30(4): 964--994.

\end{thebibliography}

\newpage
\onecolumn
\appendix

\section{Appendix -- Techniques}

\subsection{\method{} Algorithm}

We provide the full algorithm of \method{}.

\begin{algorithm}
\begin{algorithmic}[1]

    \Require Data stream $\{e_1(X_1, y_1), e_2(X_2, y_2), \ldots \}$, drift detector $H$, number of samples to wait $n_{wait}$, model function $f$, initial model parameters $\theta_0$, loss function $L$, learning rate $\eta$, maximum epochs $T$, disparity threshold $T_d$

\State Initialize current data segment $d_i = \emptyset$ and index $i = 1$
\State Initialize previous data segments $D_{prev} = \emptyset$
\For {all time $t$}
    \State $\hat{y}_{t} = f_{\theta_t}(e_t)$
    \If{$\hat{y}_{t} == y_{t}$}
        \State $error = 0$
    \Else
        \State $error = 1$
    \EndIf
    \State Add prediction result $error$ to $H$
    \If{drift is detected by $H$}
        \State Add previous segment $d_{i}$ into $D_{prev}$
        \State Gather $n_{wait}$ samples and make current segment $d_{i+1}$
        \State Split current segment $d_{i+1}$ into $d_{i+1}^T$ and $d_{i+1}^V$
        \State $\theta_{t+n_{wait}} = DataSegmentSelection(D_{prev}, d_{i+1}^T, d_{i+1}^V, L, \eta, T, T_d)$
        \State $t = t + n_{wait}$
        \State $i = i + 1$
    \Else
        \State Update $\theta_{t} = \theta_{t-1} - \eta \nabla_\theta L(y_t, \hat{y}_t)$
        \State Add sample $e_t$ into current segment $d_{i}$
        \State $t = t + 1$
    \EndIf
\EndFor
\caption{The \method{} algorithm}
\label{alg:quilt}
\end{algorithmic}
\end{algorithm}

\subsection{Disparity Score Theorem}

We prove the theorem on the upper bound of the disparity score.

\begin{theorem*}
    If training subset $T$ and validation set $V$ have the same prior distribution $P_T(X) = P_V(X)$, but different posterior distributions $P_T(y|X) \neq P_V(y|X)$, then $\mathcal{D}(T,V) \leq \mathbb{E}[\|y_t - y_v\|] \sqrt{1 + \sigma^2}$ where $\sigma = \max(\| \mathbb{E}[X'] \|)$.
\end{theorem*}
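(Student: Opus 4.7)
My plan is to bound $\mathcal{D}(T, V)$ by pulling the norm inside the expectation via Jensen's inequality, reducing the task to a pointwise estimate on $\|g_t - g_v\|$. The first step is to exploit the shared prior $P_T(X) = P_V(X)$ and couple the training and validation samples: draw a single $X$ from the common prior and then independently sample $y_t \sim P_T(y\mid X)$ and $y_v \sim P_V(y\mid X)$. Since this coupling preserves both marginals, we have $\mathbb{E}[g_t] - \mathbb{E}[g_v] = \mathbb{E}[g_t - g_v]$, and Jensen's inequality for a norm yields $\mathcal{D}(T, V) = \|\mathbb{E}[g_t - g_v]\| \leq \mathbb{E}[\|g_t - g_v\|]$.

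Next I would substitute the last-layer gradient formula $g_i = (\hat{y}_i - y_i,\, (\hat{y}_i - y_i) \cdot X_i')$ from the Gradient Computation subsection. Under the coupling, the shared $X$ forces $X_t' = X_v' = X'$ and $\hat{y}_t = \hat{y}_v = f_\theta(X)$, so the $\hat{y}$-terms cancel and pointwise $g_t - g_v = (y_v - y_t,\, (y_v - y_t) \cdot X')$. Because the bias component and the (outer-product) weight component live in orthogonal subspaces of the stacked gradient vector, the squared norm factorizes neatly:
\[
\|g_t - g_v\|^2 = \|y_v - y_t\|^2 + \|(y_v - y_t) \cdot X'\|^2 = \|y_v - y_t\|^2 \bigl(1 + \|X'\|^2\bigr).
\]

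To close the argument, I would take square roots, apply the bound $\|X'\| \leq \sigma$ to get the pointwise estimate $\|g_t - g_v\| \leq \|y_v - y_t\|\sqrt{1 + \sigma^2}$, and push this inequality through the expectation to conclude $\mathcal{D}(T, V) \leq \sqrt{1 + \sigma^2}\,\mathbb{E}[\|y_t - y_v\|]$, as required.

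The step I expect to be trickiest is isolating the embedding magnitude from the label change. A direct expansion of $\mathbb{E}[(y_v - y_t) \cdot X']$ that tries to pull out $\|\mathbb{E}[X']\|$ is blocked by the fact that the conditional label shift $\mathbb{E}[y_v - y_t \mid X]$ generally correlates with $X'(X)$, so independence cannot be invoked directly. The Jensen-plus-coupling route sidesteps this obstacle by operating pointwise inside the expectation, at the modest price of needing a uniform bound on $\|X'\|$, which is the cleanest reading of the constant $\sigma$ in the statement.
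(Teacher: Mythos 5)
Your coupling-plus-Jensen argument is sound as far as it goes, and it is arguably cleaner than the paper's derivation: drawing a single $X$ from the shared prior and sampling $y_t, y_v$ conditionally makes the $\hat{y}$-terms and the embedding terms cancel pointwise, the block decomposition $\|g_t - g_v\|^2 = \|y_t - y_v\|^2 (1 + \|X'\|^2)$ is exactly right, and your coupling also gives $\mathbb{E}[\|y_t - y_v\|]$ a precise meaning that the paper leaves implicit. The problem is the constant you end up with. The theorem's $\sigma$ is $\max(\|\mathbb{E}[X']\|)$ --- the norm of the \emph{expected} embedding --- whereas your pointwise route forces you to replace it by a uniform bound $\sup \|X'\|$, which is in general strictly larger; and since $x \mapsto \sqrt{1 + \|x\|^2}$ is convex, Jensen runs the wrong way if you instead try to pull the expectation inside $\sqrt{1+\|X'\|^2}$ afterward. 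So you have proved a genuinely weaker inequality than the one stated, and ``the cleanest reading of the constant $\sigma$'' is not the reading the paper intends.

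The paper reaches the sharper constant by a different route: it expands $\mathcal{D}(T,V)$ into its bias and weight blocks, uses $P_T(X) = P_V(X)$ to cancel $\mathbb{E}[\hat{y}_t] - \mathbb{E}[\hat{y}_v]$ and $\mathbb{E}[X_t'] - \mathbb{E}[X_v']$ at the level of expectations rather than pointwise, arrives at $\{\|\mathbb{E}[y_t - y_v]\|^2 + \|\mathbb{E}[(y_t - y_v)\cdot X_t']\|^2\}^{1/2}$, and then \emph{assumes} that $y_t - y_v$ and $X_t'$ are independent so that $\mathbb{E}[(y_t - y_v)\cdot X_t'] = \mathbb{E}[y_t - y_v]\cdot \mathbb{E}[X_t']$; submultiplicativity of the norm and $\|\mathbb{E}[y_t - y_v]\| \leq \mathbb{E}[\|y_t - y_v\|]$ then give the stated bound. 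The obstruction you identified --- that the conditional label shift $\mathbb{E}[y_t - y_v \mid X]$ generally correlates with $X'(X)$ --- is real, and the paper handles it by assumption rather than by proof. In short: your argument is rigorous but establishes a weaker constant; the paper's argument establishes the stated constant only under an uncorrelatedness hypothesis that it asserts rather than justifies. To match the theorem exactly you would need either to adopt that hypothesis explicitly or to accept the weaker $\sup\|X'\|$ in place of $\|\mathbb{E}[X']\|$.
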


\begin{proof}
\begin{equation*}
\begin{split}
\mathcal{D}(T,V) & = \| \mathbb{E}[g_t] - \mathbb{E}[g_v] \| \\
&= \| \mathbb{E}[(\hat{y}_t - y_t, (\hat{y}_t - y_t) \cdot X_t')] - \mathbb{E}[(\hat{y}_v - y_v, (\hat{y}_v - y_v) \cdot X_v')] \| \\
&= \| (\mathbb{E}[\hat{y}_t - y_t], \mathbb{E}[(\hat{y}_t - y_t) \cdot X_t']) - (\mathbb{E}[\hat{y}_v - y_v], \mathbb{E}[(\hat{y}_v - y_v) \cdot X_v']) \| \\
&= \| (\mathbb{E}[\hat{y}_t - y_t] - \mathbb{E}[\hat{y}_v - y_v], \mathbb{E}[(\hat{y}_t - y_t) \cdot X_t'] - \mathbb{E}[(\hat{y}_v - y_v) \cdot X_v']) \| \\
\end{split}
\end{equation*}

Since the $\mathcal{D}$ score measures the $L_2$-norm distance, we can rewrite the function $\mathcal{D}(T,V)$ as follows:

\begin{equation*}
\begin{split}
\mathcal{D}(T,V) &= \{ \| \mathbb{E}[\hat{y}_t - y_t] - \mathbb{E}[\hat{y}_v - y_v] \|^{2} + \| \mathbb{E}[(\hat{y}_t - y_t) \cdot X_t'] - \mathbb{E}[(\hat{y}_v - y_v) \cdot X_v'] \|^{2} \}^{1/2} \\
&= \{ \| \mathbb{E}[(\hat{y}_t - y_t) - (\hat{y}_v - y_v)] \|^{2} + \| \mathbb{E}[(\hat{y}_t - y_t) \cdot X_t' - (\hat{y}_v - y_v) \cdot X_v'] \|^{2} \}^{1/2} \\
\end{split}
\end{equation*}

Next, we subtract and add $(\hat{y}_v - y_v) \cdot X_t'$ once in the second term and get

\begin{equation*}
\begin{split}
\mathcal{D}(T,V) &= \{ \| \mathbb{E}[(\hat{y}_t - y_t) - (\hat{y}_v - y_v)] \|^{2} + \| \mathbb{E}[(\hat{y}_t - y_t) \cdot X_t' - (\hat{y}_v - y_v) \cdot X_t' + (\hat{y}_v - y_v) \cdot X_t' - (\hat{y}_v - y_v) \cdot X_v'] \|^{2} \}^{1/2} \\
&= \{ \| \mathbb{E}[(\hat{y}_t - y_t) - (\hat{y}_v - y_v)] \|^{2} + \| \mathbb{E}[((\hat{y}_t - y_t) - (\hat{y}_v - y_v)) \cdot X_t' + (\hat{y}_v - y_v) \cdot (X_t' - X_v')] \|^{2} \}^{1/2} \\
&= \{ \| \mathbb{E}[(\hat{y}_t - y_t) - (\hat{y}_v - y_v)] \|^{2} + \| \mathbb{E}[((\hat{y}_t - y_t) - (\hat{y}_v - y_v)) \cdot X_t'] + \mathbb{E}[(\hat{y}_v - y_v) \cdot (X_t' - X_v')] \|^{2} \}^{1/2} \\
&= \{ \| \mathbb{E}[(\hat{y}_t - \hat{y}_v) - (y_t - y_v)] \|^{2} + \| \mathbb{E}[((\hat{y}_t - \hat{y}_v) - (y_t - y_v)) \cdot X_t'] + \mathbb{E}[(\hat{y}_v - y_v) \cdot (X_t' - X_v')] \|^{2} \}^{1/2} \\
&= \{ \| \mathbb{E}[\hat{y}_t - \hat{y}_v] - \mathbb{E}[y_t - y_v] \|^{2} + \| \mathbb{E}[(\hat{y}_t - \hat{y}_v) \cdot X_t'] - \mathbb{E}[(y_t - y_v) \cdot X_t'] + \mathbb{E}[(\hat{y}_v - y_v) \cdot (X_t' - X_v')] \|^{2} \}^{1/2} \\
\end{split}
\end{equation*}

Since there is no linear relationship between $\hat{y}_t - \hat{y}_v$ and $X_t'$ (also similar for $\hat{y}_v - y_v$ and $X_t' - X_v'$), we can consider them to be uncorrelated. Being uncorrelated implies that their covariance is zero, which leads to $\mathbb{E}[(\hat{y}_t - \hat{y}_v) \cdot X_t'] = \mathbb{E}[\hat{y}_t - \hat{y}_v] \cdot \mathbb{E}[X_t']$ and $\mathbb{E}[(\hat{y}_v - y_v) \cdot (X_t' - X_v')] = \mathbb{E}[\hat{y}_v - y_v] \cdot \mathbb{E}[X_t' - X_v']$. Then, we have

\begin{equation*}
\begin{split}
\mathcal{D}(T,V) &= \{ \| \mathbb{E}[\hat{y}_t - \hat{y}_v] - \mathbb{E}[y_t - y_v] \|^{2} + \| \mathbb{E}[\hat{y}_t - \hat{y}_v] \cdot \mathbb{E}[X_t'] - \mathbb{E}[(y_t - y_v) \cdot X_t'] + \mathbb{E}[\hat{y}_v - y_v] \cdot \mathbb{E}[X_t' - X_v'] \|^{2} \}^{1/2}
\end{split}
\end{equation*}

We know that the training subset T and validation set V have the same prior distribution, which implies $\mathbb{E}[X_t] = \mathbb{E}[X_v]$. In data subset selection, we compute the gradients of the training subset and validation set with respect to the same model. Then, with the same front layers of the model, the same prior distribution leads to the same expected values of embedding vectors and model outputs, i.e., $\mathbb{E}[X_t'] = \mathbb{E}[X_v']$ and $\mathbb{E}[\hat{y}_t] = \mathbb{E}[\hat{y}_v]$. We can rewrite the equations as follows: $\mathbb{E}[X_t' - X_v'] = \mathbb{E}[X_t'] - \mathbb{E}[X_v'] = 0$ and $\mathbb{E}[\hat{y}_t - \hat{y}_v] = \mathbb{E}[\hat{y}_t] - \mathbb{E}[\hat{y}_v] = 0$. We then obtain:

\begin{equation*}
\begin{split}
\mathcal{D}(T,V) &= \{ \| \mathbb{E}[y_t - y_v] \|^{2} + \| \mathbb{E}[(y_t - y_v) \cdot X_t'] \|^{2} \}^{1/2} \\
\end{split}
\end{equation*}

Since the truth labels and embedding vectors are independent, $\mathbb{E}[(y_t - y_v) \cdot X_t'] = \mathbb{E}[y_t - y_v] \cdot \mathbb{E}[X_t']$. Then, we can write

\begin{equation*}
\begin{split}
\mathcal{D}(T,V) &= \{ \| \mathbb{E}[y_t - y_v] \|^{2} + \| \mathbb{E}[y_t - y_v] \cdot \mathbb{E}[X_t'] \|^{2} \}^{1/2} \\
\end{split}
\end{equation*}

Using the property of norms $\| \mathbb{E}[y_t - y_v] \cdot \mathbb{E}[X_t'] \| \leq \| \mathbb{E}[y_t - y_v] \| \cdot \| \mathbb{E}[X_t'] \|$, we have:

\begin{equation*}
\begin{split}
\mathcal{D}(T,V) &\leq \{ \| \mathbb{E}[y_t - y_v] \|^{2} + \| \mathbb{E}[y_t - y_v] \|^{2} \cdot \| \mathbb{E}[X_t'] \|^{2} \}^{1/2} \\
\end{split}
\end{equation*}

Since the norm function $\|\cdot\|$ is convex, using Jensen's inequality, $\| \mathbb{E}[y_t - y_v] \| \leq \mathbb{E}[\| y_t - y_v \|]$. We finally get

\begin{equation*}
\begin{split}
\mathcal{D}(T,V) &\leq \{ \mathbb{E}[ \| y_t - y_v \| ]^{2} + \mathbb{E}[ \| y_t - y_v \| ]^{2} \cdot \| \mathbb{E}[X_t'] \|^{2} \}^{1/2} \\
&= \{ \mathbb{E}[ \| y_t - y_v \| ]^{2} (1 + \| \mathbb{E}[X_t'] \|^{2}) \}^{1/2} \\
&= \mathbb{E}[ \| y_t - y_v \| ] (1 + \| \mathbb{E}[X_t'] \|^{2})^{1/2} \\
&\leq \mathbb{E}[ \| y_t - y_v \| ] \sqrt{1 + \sigma^2} \\
\end{split}
\end{equation*}

where $\sigma = \max(\| \mathbb{E}[X_t'] \|)$. Therefore, this result shows that the disparity score of a training subset T w.r.t a validation set V is upper bounded by $\mathbb{E}[\|y_t - y_v\|] \sqrt{1 + \sigma^2}$. It shows that the expected values of the predictions on the training and validation sets cancel out, and we are left with the differences between the labels which is a proxy of drift severity.
\end{proof}

\subsection{Gain Score Theorem}

We prove the theorem on a lower bound of the gain score. This theorem is not included in the paper, but used for the derivations in the case study.

\begin{theorem*}
    If training subset $T$ and validation set $V$ have the same prior distribution $P_T(X) = P_V(X)$, but different posterior distributions $P_T(y|X) \neq P_V(y|X)$, then $\mathcal{G}(T,V) \geq \mathbb{E}[\hat{y}_t - y_t] \cdot \mathbb{E}[\hat{y}_v - y_v]$.
\end{theorem*}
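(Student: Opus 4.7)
The plan is to mirror the disparity-score proof step for step, since the gain score and the disparity score are built from the same gradient vectors and differ only in whether we combine them by inner product or by norm-distance. First I would expand
\[
\mathcal{G}(T,V) = \mathbb{E}[g_t] \cdot \mathbb{E}[g_v]
\]
using $g_i = (\hat{y}_i - y_i,\, (\hat{y}_i - y_i) \cdot X_i')$, and split the dot product into a bias contribution $\mathbb{E}[\hat{y}_t - y_t] \cdot \mathbb{E}[\hat{y}_v - y_v]$ plus a weight contribution $\mathbb{E}[(\hat{y}_t - y_t) \cdot X_t'] \cdot \mathbb{E}[(\hat{y}_v - y_v) \cdot X_v']$, where the weight piece is read as the Frobenius-type inner product of two matrices.

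Next I would reuse the uncorrelatedness argument from the disparity proof: because $\hat{y} - y$ has no linear relationship with $X'$, each factor $\mathbb{E}[(\hat{y}_\cdot - y_\cdot) \cdot X_\cdot']$ splits as $\mathbb{E}[\hat{y}_\cdot - y_\cdot] \cdot \mathbb{E}[X_\cdot']$. The assumption $P_T(X) = P_V(X)$, propagated through identical front layers, then forces $\mathbb{E}[X_t'] = \mathbb{E}[X_v']$, so the weight contribution collapses to $(\mathbb{E}[\hat{y}_t - y_t] \cdot \mathbb{E}[\hat{y}_v - y_v])\,\|\mathbb{E}[X_t']\|^{2}$. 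Collecting the two pieces yields the identity
\[
\mathcal{G}(T,V) = (\mathbb{E}[\hat{y}_t - y_t] \cdot \mathbb{E}[\hat{y}_v - y_v])\,(1 + \|\mathbb{E}[X_t']\|^{2}),
\]
and the theorem follows by dropping the nonnegative scalar $\|\mathbb{E}[X_t']\|^{2}$.

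The main technical obstacle is carrying the outer-product structure of the weight gradient cleanly through both the uncorrelatedness factorization and the subsequent inner product. In particular, one needs to recognize that the Frobenius inner product of two rank-one outer products sharing a common $X'$-factor collapses as $(u \cdot u')\|\mu\|^{2}$ rather than as a more intricate expression; getting this bookkeeping right is what produces the clean $(1 + \|\mathbb{E}[X_t']\|^{2})$ factor. A smaller subtlety is that the stated bound is really an equality up to the scalar factor $(1 + \|\mathbb{E}[X_t']\|^{2})$, and this equality form is what the Case 1 and Case 2 derivations in the case study actually plug values into.
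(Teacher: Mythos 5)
Your proposal is correct and follows essentially the same route as the paper's own proof: the same split of $\mathbb{E}[g_t]\cdot\mathbb{E}[g_v]$ into the bias term $\mathbb{E}[\hat{y}_t - y_t]\cdot\mathbb{E}[\hat{y}_v - y_v]$ plus the weight term, the same uncorrelatedness factorization $\mathbb{E}[(\hat{y}_\cdot - y_\cdot)\cdot X_\cdot'] = \mathbb{E}[\hat{y}_\cdot - y_\cdot]\cdot\mathbb{E}[X_\cdot']$, the same use of $P_T(X)=P_V(X)$ to obtain $\mathbb{E}[X_t']=\mathbb{E}[X_v']$ and hence the factor $(1+\|\mathbb{E}[X']\|^2)$, and the same final step of dropping that nonnegative multiplier. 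Your closing remark that the result is really an equality up to the factor $(1+\|\mathbb{E}[X']\|^2)$, and that this is what the case-study derivations actually use, matches the paper's presentation exactly.
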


\begin{proof}
\begin{equation*}
\begin{split}
\mathcal{G}(T,V) & = \mathbb{E}[g_t] \cdot \mathbb{E}[g_v] \\
&= \mathbb{E}[(\hat{y}_t - y_t, (\hat{y}_t - y_t) \cdot X_t')] \cdot \mathbb{E}[(\hat{y}_v - y_v, (\hat{y}_v - y_v) \cdot X_v')] \\
&= (\mathbb{E}[\hat{y}_t - y_t], \mathbb{E}[(\hat{y}_t - y_t) \cdot X_t']) \cdot (\mathbb{E}[\hat{y}_v - y_v], \mathbb{E}[(\hat{y}_v - y_v) \cdot X_v']) \\
&= \mathbb{E}[\hat{y}_t - y_t] \cdot \mathbb{E}[\hat{y}_v - y_v] + \mathbb{E}[(\hat{y}_t - y_t) \cdot X_t'] \cdot \mathbb{E}[(\hat{y}_v - y_v) \cdot X_v'] \\
\end{split}
\end{equation*}

Since there is no linear relationship between $\hat{y}_t - y_t$ and $X_t'$ (also same for $\hat{y}_v - y_v$ and $X_v'$), we can consider them to be uncorrelated. Being uncorrelated implies that their covariance is zero, which leads to $\mathbb{E}[(\hat{y}_t - y_t) \cdot X_t'] = \mathbb{E}[\hat{y}_t - y_t] \cdot \mathbb{E}[X_t']$ (similarly, $\mathbb{E}[(\hat{y}_v - y_v) \cdot X_v'] = \mathbb{E}[\hat{y}_v - y_v] \cdot \mathbb{E}[X_v']$). Now we have,

\begin{equation*}
\begin{split}
\mathcal{G}(T,V) & = \mathbb{E}[\hat{y}_t - y_t] \cdot \mathbb{E}[\hat{y}_v - y_v] + \mathbb{E}[\hat{y}_t - y_t] \cdot \mathbb{E}[X_t'] \cdot \mathbb{E}[\hat{y}_v - y_v] \cdot \mathbb{E}[X_v'] \\
&= \mathbb{E}[\hat{y}_t - y_t] \cdot \mathbb{E}[\hat{y}_v - y_v] (1 + \mathbb{E}[X_t'] \cdot \mathbb{E}[X_v']) \\
\end{split}
\end{equation*}

Note that the training subset $T$ and validation set $V$ have the same prior distribution, which implies that $\mathbb{E}[X_t] = \mathbb{E}[X_v]$. In data subset selection, we compute the gradients of training subset and validation set with respect to the same model. Then, with the same front layers of the model, the same prior distribution leads to the same expected value of embedding vectors, i.e., $\mathbb{E}[X_t'] = \mathbb{E}[X_v'] = \mathbb{E}[X']$. We thus have the following result:

\begin{equation*}
\begin{split}
\mathcal{G}(T,V) &= \mathbb{E}[\hat{y}_t - y_t] \cdot \mathbb{E}[\hat{y}_v - y_v] (1 + \| \mathbb{E}[X'] \|^2) \\
&\geq \mathbb{E}[\hat{y}_t - y_t] \cdot \mathbb{E}[\hat{y}_v - y_v]
\end{split}
\end{equation*}

This result shows that the gain score of a training subset $T$ w.r.t a validation set $V$ is lower bounded by $\mathbb{E}[\hat{y}_t - y_t] \cdot \mathbb{E}[\hat{y}_v - y_v]$. The lower bound involves the model outputs $\hat{y}_t$ and $\hat{y}_v$, which implies that the gain score is related to the model's state.
\end{proof}

\subsection{Derivations in Case Study}

We prove the results in the case study involving two concepts with and without drift.

\paragraph{Case 1:} $T$'s concept is 1, and $V$'s concept is 1 (i.e., there is no concept drift).

\begin{equation*}
\begin{split}
\mathcal{D}(T,V) &\leq \mathbb{E}[ \| y_t - y_v \| ] \sqrt{1 + \sigma^2} \\
&= \mathbb{E}[ \| (0, 1, \ldots, 0) - (0, 1, \ldots, 0) \| ] \sqrt{1 + \sigma^2} \\
&= \mathbb{E}[ \| (0, 0, \ldots, 0) \| ] \sqrt{1 + \sigma^2} \\
&= 0
\end{split}
\end{equation*}

\begin{equation*}
\begin{split}
\mathcal{G}(T,V) &\geq \mathbb{E}[\hat{y}_t - y_t] \cdot \mathbb{E}[\hat{y}_v - y_v] \\
&= (\mathbb{E}[\hat{y}_t] - \mathbb{E}[y_t]) \cdot (\mathbb{E}[\hat{y}_v] - \mathbb{E}[y_v]) \\
&= (\mathbb{E}[(s_{t1}, s_{t2}, \ldots, s_{tc})] - (0, 1, \ldots, 0)) \\
&\quad \cdot (\mathbb{E}[(s_{v1}, s_{v2}, \ldots, s_{vc})] - (0, 1, \ldots, 0)) \\
&= ((s_1, s_2, \ldots, s_c) - (0, 1, \ldots, 0)) \\ 
&\quad \cdot ((s_1, s_2, \ldots, s_c) - (0, 1, \ldots, 0)) \\
&= (s_1, s_2 - 1, \ldots, s_c) \cdot (s_1, s_2 - 1, \ldots, s_c) \\
&= s_1^2 + (s_2 - 1)^2 + \cdots + s_c^2
\end{split}
\end{equation*}

\paragraph{Case 2: } $T$'s concept is 0, and $V$'s concept is 1 (i.e., there is a concept drift).

\begin{equation*}
\begin{split}
\mathcal{D}(T,V) &\leq \mathbb{E}[ \| y_t - y_v \| ] \sqrt{1 + \sigma^2} \\
&= \mathbb{E}[ \| (1, 0, \ldots, 0) - (0, 1, \ldots, 0) \| ] \sqrt{1 + \sigma^2} \\
&= \mathbb{E}[ \| (1, -1, \ldots , 0) \| ] \sqrt{1 + \sigma^2} \\
&= \sqrt{2 (1 + \sigma^2)}
\end{split}
\end{equation*}

\begin{equation*}
\begin{split}
\mathcal{G}(T,V) &\geq \mathbb{E}[\hat{y}_t - y_t] \cdot \mathbb{E}[\hat{y}_v - y_v] \\
&= (\mathbb{E}[\hat{y}_t] - \mathbb{E}[y_t]) \cdot (\mathbb{E}[\hat{y}_v] - \mathbb{E}[y_v]) \\
&= (\mathbb{E}[(s_{t1}, s_{t2}, \ldots, s_{tc})] - (1, 0, \ldots, 0)) \\
&\quad \cdot (\mathbb{E}[(s_{v1}, s_{v2}, \ldots, s_{vc})] - (0, 1, \ldots, 0)) \\
&= ((s_1, s_2, \ldots, s_c) - (1, 0, \ldots, 0)) \\
&\quad \cdot ((s_1, s_2, \ldots, s_c) - (0, 1, \ldots, 0)) \\
&= (s_1 - 1, s_2, \ldots, s_c) \cdot (s_1, s_2 - 1, \ldots, s_c) \\
&= s_1(s_1-1) + s_2(s_2-1) + \cdots + s_c^{2}
\end{split}
\end{equation*}

During the initial training epochs, the model predictions are almost random, which means that the $s$ values for each class is close to $1/c$. As the model adaptively fits on the new concept, the $s_2$ value gets closer to 1, while the other values get closer to 0. We can theoretically describe the simulation results for $\mathcal{D}$ and $\mathcal{G}$ scores in Figure~\ref{fig:casegraph} using the derivations above. Next, we present the synthetic data generated for the empirical evaluations in Figure~\ref{fig:casedata}. We show the two training subsets for cases 1 and 2 and the validation set.

\begin{figure}[h]
\centering
  \begin{subfigure}{0.32\columnwidth}
     \centering
     \includegraphics[width=\columnwidth]{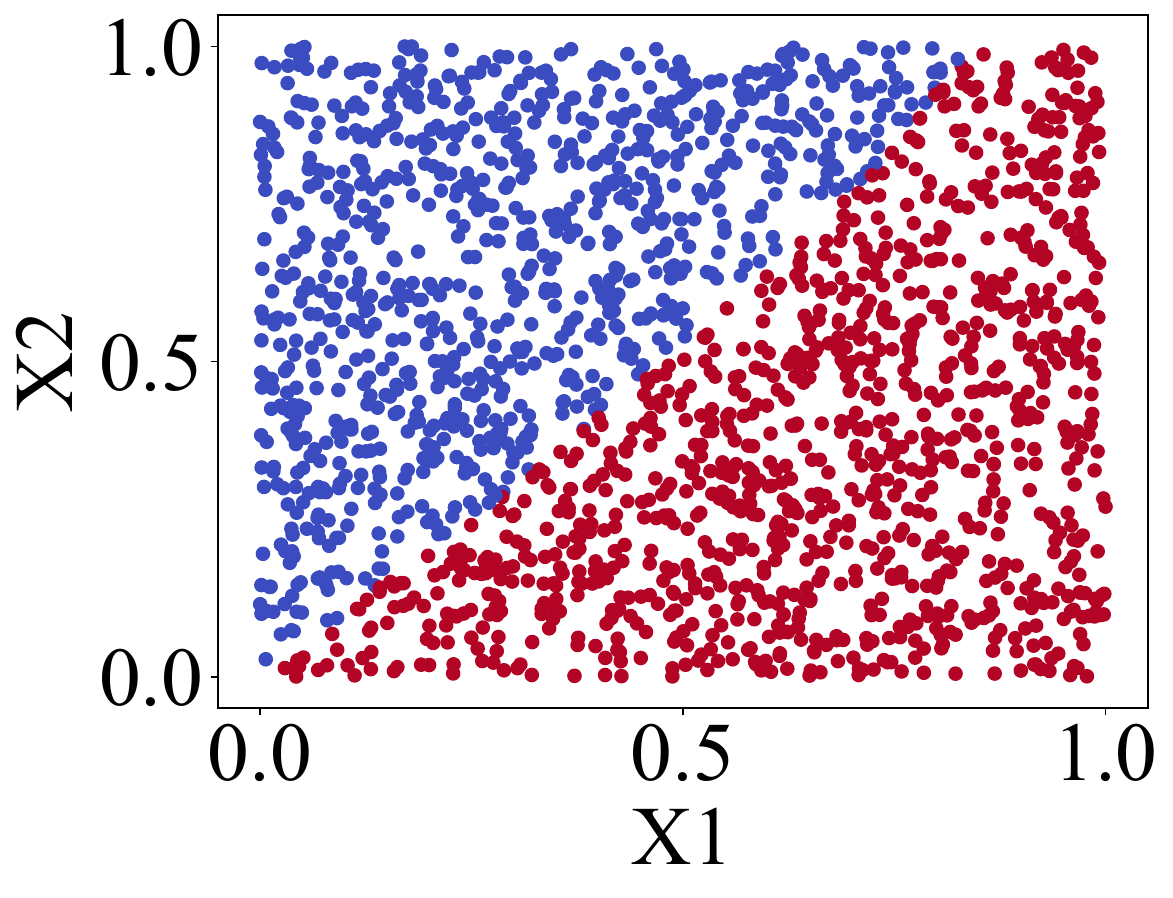}
     \caption{{\sf Case 1 Training Subset}}
     \label{fig:same_concept}
  \end{subfigure}
  \begin{subfigure}{0.32\columnwidth}
     \centering
     \includegraphics[width=\columnwidth]{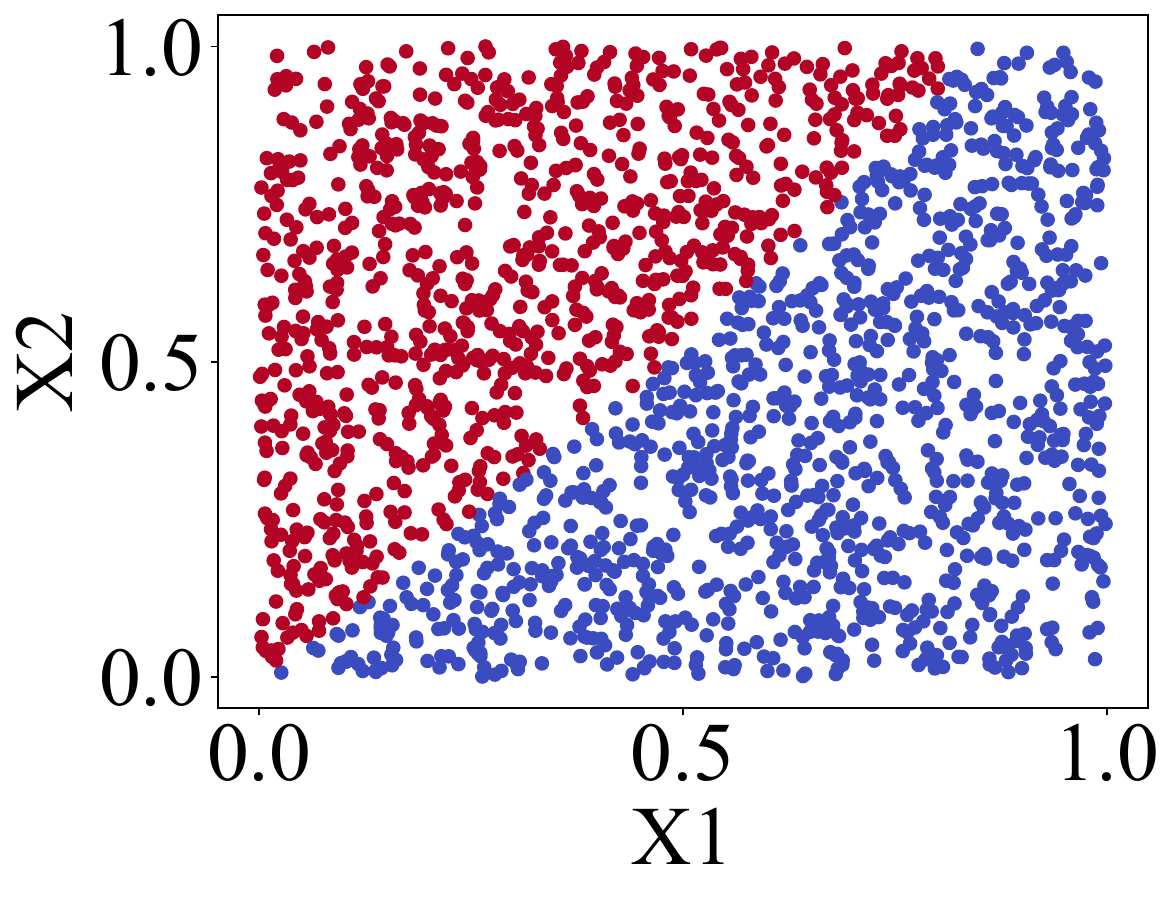}
     \caption{{\sf Case 2 Training Subset}}
     \label{fig:drift_concept}
  \end{subfigure}
  \begin{subfigure}{0.32\columnwidth}
     \centering
     \includegraphics[width=\columnwidth]{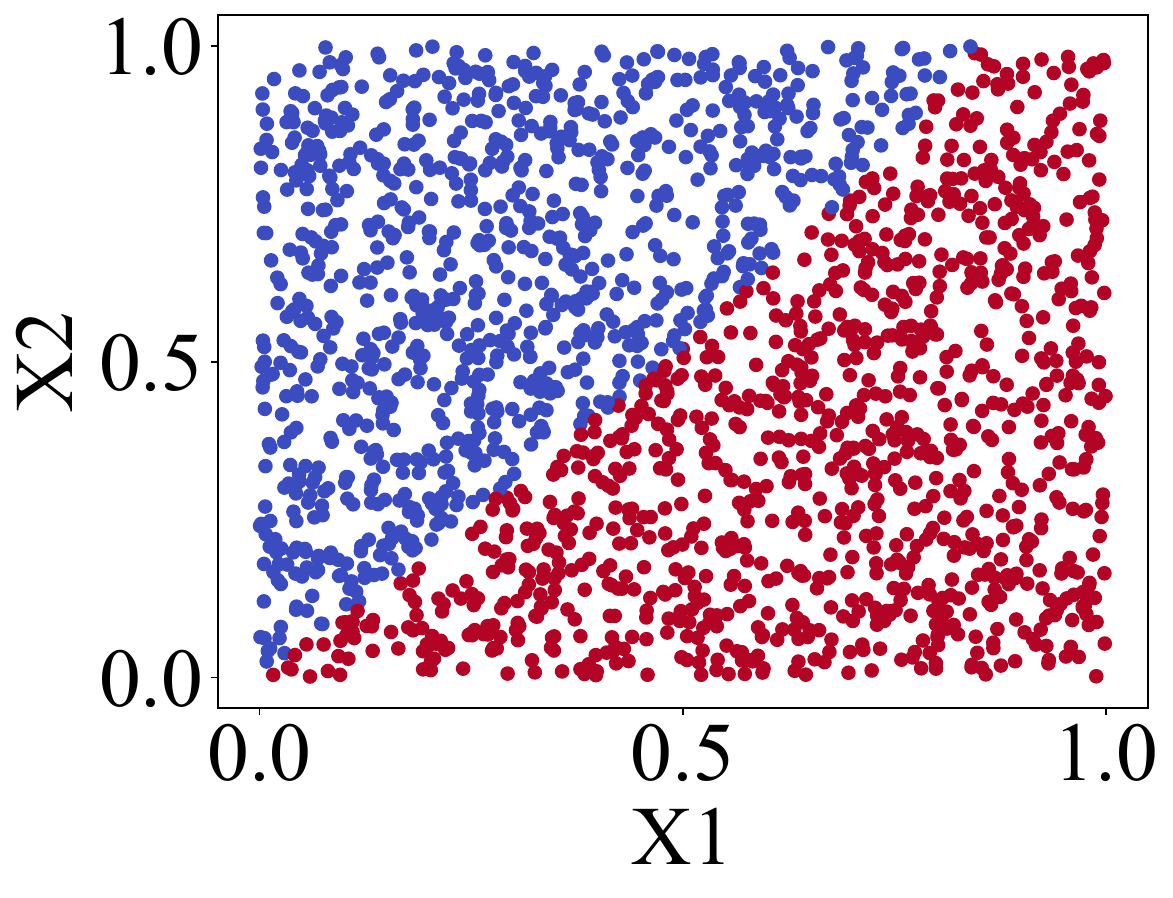}
     \caption{{\sf Validation Set}}
     \label{fig:validation_concept}
  \end{subfigure}
  \caption{The training and validation sets for the two cases.}
\label{fig:casedata}
\end{figure}

\section{Appendix -- Experiments}

\subsection{Dataset Descriptions}

We provide more details of the four synthetic and five real datasets. Among the four synthetic datasets, we know the drift locations for the SEA and Sine datasets, but not for Hyperplane and Random RBF due to the gradual drifting.

\begin{itemize}[leftmargin=*]
\item {\bf SEA}: We generate the Streaming Ensemble Algorithm (SEA) dataset\,\cite{DBLP:conf/kdd/StreetK01}, which is a standard dataset for simulating sudden concept drifts. We generate samples in a three-dimensional feature space with random numeric values that vary from 0 to 10, where only two of them are relevant to the binary classification task. A sample in a segment is in one of the classes if the sum of the first two features is within a defined threshold for that segment. We also flip 10\% of the labels to add noise. We use four thresholds to simulate four different concepts and generate eight data segments with each concept reoccurring once.
\item {\bf Rotating Hyperplane}\,\cite{DBLP:conf/kdd/HultenSD01}: We view hyperplanes as concepts and vary their orientiations and positions to simulate drifts. A hyperplane is defined by feature weights, and we make the weights drift over time. There are ten relevant features including two drift features, and we set 5\% of label flipping noise to the binary class label. We split the dataset into eight data segments that gradually and incrementally drift. 
\item {\bf Random RBF}\,\cite{DBLP:journals/jmlr/BifetHKP10}: We use Random Radial Basis Function to make a number of random centroids and new samples are generated by selecting the center of centroids. We set 50 centroids and make all of them move with the speed of 0.0001 to simulate sudden, gradual, and incremental drifts. There are ten features and five class labels. We split the dataset every 2,000 samples and generate a total of eight data segments with drifts.
\item {\bf Sine}\,\cite{DBLP:conf/sbia/GamaMCR04}: We use four numerical features with values that range from 0 to 1. Two of the features are relevant to a given binary classification task, while the two other features simulate noise. We use four different sine functions as concepts and generate eight data segments with each concept reoccurring once.
\item {\bf Electricity Market}\,\cite{Harries99splice-2comparative}: A real dataset for the Australian New South Wales Electricity Market from 1996 to 1998. Each sample is generated every 30 minutes, so 48 samples of the dataset correspond to one day. There are six features including the NSW electricity demand, the Vic electricity demand, and the scheduled electricity transfer between states. The label identifies the change of the electricity price relative to a moving average of the last 24 hours. As the price is correlated to season, we divide the data into ten 90-day segments.
\item {\bf NOAA Weather}\,\cite{DBLP:journals/tnn/ElwellP11}: This real dataset measures the weather in Bellevue NE during the period of 1949--1999. There are eight features including temperature, dew point, sea level pressure, visibility, and wind speed. The labels indicate whether it rained (positive label) or not (negative label). Since the truth label of drift time is unknown, we consider yearly drift and divide the dataset into ten 5-year segments. Although there are possible drifts between the period, they are mild.
\item {\bf Spam}\,\cite{DBLP:journals/kais/KatakisTV10}: This real dataset consists of email messages from the Spam Assassin Collection. There are 9,324 samples of messages and a message is represented by 499 features of boolean bag-of-words. The labels denote whether a message is spam or legitimate. The characteristics of spam messages gradually change over time which shows gradual concept drift.
\item {\bf Usenet1 and Usenet2}\,\cite{DBLP:conf/ecai/KatakisTV08}: Two real datasets are based on the 20 newsgroup collection with three topics: medicine, space, and baseball. Each sample contains messages about different topics, and a user labels them sequentially by personal interests whether the topic of a message is interesting (1) or junk (0). The user's interests suddenly change for every 300 samples. There are 99 features representing words that appear in all 1,500 messages. There are five segments in each dataset. There are two different reoccurring concepts in Usenet1 and three different reoccurring concepts in Usenet2.
\end{itemize}

\subsection{More Details on Experimental Settings}

In the experiments, we use a simple neural network for two reasons: (1) to be consistent with the baseline methods that also use simple models like decision trees, random forests, and simple neural networks and (2) our method with a simple model is already accurate enough against concept drifts. Our method can certainly be used with more complex models for better performance. 

During model training, if a concept drift occurs, we randomly initialize the model before training it again. While we could also re-use pre-trained models in previous steps, we do not see a clear performance difference between using randomly-initialized and pre-trained models as shown in Table~\ref{tbl:performance_initialization}. For the datasets with low drift severity (SEA, Electricity, Weather, and Spam), using pre-trained models reduces runtime for similar accuracies, but for those with high drift severity (Random RBF and Sine), random initialization gives better accuracy results. Since we use a (small) validation set to select data segments, the concern about initial model randomness is not a critical issue.

\begin{table}[h]
  \setlength{\tabcolsep}{4pt}
  \caption{Accuracy and runtime (sec) results of Quilt with different model initialization methods on the six datasets.} 
  \centering
  \begin{tabular}{l|cccccccccccc}
  \toprule
    {Methods} & \multicolumn{2}{c}{\sf SEA} & \multicolumn{2}{c}{\sf Random RBF} & \multicolumn{2}{c}{\sf Sine} & \multicolumn{2}{c}{\sf Electricity} & \multicolumn{2}{c}{\sf Weather} & \multicolumn{2}{c}{\sf Spam}  \\
    \midrule  
    & {Acc.} & {Time} & {Acc.} & {Time} & {Acc.} & {Time} & {Acc.} & {Time} & {Acc.} & {Time} & {Acc.} & {Time} \\
    \midrule
    {Random} & {.888}\tiny{$\pm$.004} & {2.20} & {.833}\tiny{$\pm$.008} & {3.22} & {.936}\tiny{$\pm$.005} & {4.88} & {.728}\tiny{$\pm$.007} & {5.24} & {.796}\tiny{$\pm$.004} & {2.00} & {.974}\tiny{$\pm$.003} & {2.59}  \\
    {Previous} & {.888}\tiny{$\pm$.004} & {2.04} & {.816}\tiny{$\pm$.015} & {2.28} & {.923}\tiny{$\pm$.007} & {6.43} & {.727}\tiny{$\pm$.007} & {3.90} & {.799}\tiny{$\pm$.005} & {1.81} & {.975}\tiny{$\pm$.003} & {1.02} \\
    \bottomrule
  \end{tabular}
  \label{tbl:performance_initialization}
\end{table}

Regarding the disparity threshold $T_d$, we set it using Bayesian optimization with a search interval between (0, 2). More specifically, for each new data segment, we run random exploration and Bayesian optimization 3 times to set $T_d$. Searching beyond $T_d = 2$ is unnecessary as we empirically observe that it roughly equates to more than 80\% of the labels being drifted, meaning the data segment is useless.

\subsection{More Results on Accuracy, F1 Score, and Runtime}

We compare \method{} with the four types of baselines w.r.t. accuracy, $F_1$ score, and runtime on the four synthetic datasets and five real datasets in Table~\ref{tbl:performance_synthetic} and Table~\ref{tbl:performance_real}, respectively. The observations are the same as in Table~\ref{tbl:performance_representative}.

\begin{table}[h]
  \setlength{\tabcolsep}{7pt}
  \caption{Accuracy, $F_1$ score, and runtime (sec) results on the four synthetic datasets. We compare \method{} with all the four types of baselines.} 
  \centering
  \begin{tabular}{lcccccccccccc}
  \toprule
    {Methods} & \multicolumn{3}{c}{\sf SEA} & \multicolumn{3}{c}{\sf Hyperplane} & \multicolumn{3}{c}{\sf Random RBF} & \multicolumn{3}{c}{\sf Sine}  \\
    \cmidrule{1-13}
    {} & {Acc.} & {F1.} & {Time} & {Acc.} & {F1.} & {Time} & {Acc.} & {F1.} & {Time} & {Acc.} & {F1.} & {Time} \\
    \midrule
    {Full Data} & {.849} & {.881} & {3.36} & {.843} & {.844} & {2.41} & {.821} & {.820} & {9.43} & {.449} & {.436} & {2.25} \\
    {Current Seg.} & {.864} & {.888} & {0.20} & {.893} & {.894} & {0.46} & {.679} & {.673} & {0.56} & {.899} & {.898} & {0.94} \\
    \cmidrule{1-13}
    {HAT} & {.825} & {.862} & {1.38} & {.860} & {.862} & {2.23} & {.514} & {.519} & {2.35} & {.293} & {.305} & {1.67} \\
    {ARF} & {.825} & {.863} & {23.49} & {.797} & {.793} & {26.86} & {.645} & {.642} & {44.64} & {.821} & {.823} & {21.40} \\
    {Learn++.NSE} & {.804} & {.836} & {6.65} & {.750} & {.755} & {7.05} & {.611} & {.612} & {5.68} & {.925} & {.925} & {5.73} \\
    {SEGA} & {.797} & {.842} & {4.37} & {.853} & {.851} & {4.49} & {.825} & {.825} & {4.47} & {.253} & {.260} & {4.35} \\
    \cmidrule{1-13}
    {CVDTE} & {.806} & {.810} & {0.02} & {.744} & {.752} & {0.04} & {.614} & {.621} & {0.05} & {.857} & {.835} & {0.02} \\
    \cmidrule{1-13}
    {GLISTER} & {.857} & {.885} & {25.89} & {.906} & {.905} & {21.64} & {.794} & {.794} & {63.73} & {.879} & {.876} & {14.93} \\
    {GRAD-MATCH} & {.853} & {.884} & {2.13} & {.843} & {.845} & {1.40} & {.790} & {.790} & {6.66} & {.547} & {.529} & {0.80} \\
    \cmidrule{1-13}
    {\bf \method{}} & {$\mathbf{.888}$} & {$\mathbf{.909}$} & {2.20} & {$\mathbf{.911}$} & {$\mathbf{.912}$} & {2.08} & {$\mathbf{.833}$} & {$\mathbf{.833}$} & {3.22} & {$\mathbf{.936}$} & {$\mathbf{.936}$} & {4.88} \\
    \bottomrule
  \end{tabular}
  \label{tbl:performance_synthetic}
\end{table}

\begin{table}[h]
  \setlength{\tabcolsep}{4pt}
  \caption{Accuracy, $F_1$ score, and runtime (sec) results on the five real datasets. We compare \method{} with all the four types of baselines.}
  \centering
  \begin{tabular}{lccccccccccccccc}
  \toprule
    {Methods} & \multicolumn{3}{c}{\sf Electricity} & \multicolumn{3}{c}{\sf Weather} & \multicolumn{3}{c}{\sf Spam} & \multicolumn{3}{c}{\sf Usenet1} & \multicolumn{3}{c}{\sf Usenet2} \\
    \cmidrule{1-16}
    {} & {Acc.} & {F1.} & {Time} & {Acc.} & {F1.} & {Time} & {Acc.} & {F1.} & {Time} & {Acc.} & {F1.} & {Time} & {Acc.} & {F1.} & {Time} \\
    \midrule
    {Full Data} & {.694} & {.758} & {7.42} & {$\mathbf{.800}$} & {.641} & {4.33} & {.970} & {.973} & {1.17} & {.576} & {.512} & {0.23} & {.701} & {.413} & {0.18} \\
    {Current Seg.} & {.709} & {.756} & {0.52} & {.756} & {.509} & {0.26} & {.955} & {.963} & {0.16} & {.752} & {.716} & {0.16} & {.745} & {.613} & {0.18} \\
    \cmidrule{1-16}
    {HAT} & {.691} & {.743} & {6.43} & {.729} & {.452} & {2.10} & {.888} & {.847} & {25.67} & {.622} & {.558} & {0.87} & {.730} & {.472} & {0.87} \\
    {ARF} & {.713} & {.762} & {57.36} & {.775} & {.542} & {30.51} & {.921} & {.931} & {44.83} & {.629} & {.616} & {4.12} & {.682} & {.311} & {4.04} \\
    {Learn++.NSE} & {.698} & {.734} & {17.26} & {.703} & {.523} & {7.86} & {.928} & {.942} & {3.81} & {.433} & {.412} & {0.35} & {.637} & {.251} & {0.33} \\
    {SEGA} & {.637} & {.697} & {10.26} & {.777} & {.602} & {4.11} & {.858} & {.851} & {6.67} & {.403} & {.318} & {0.84} & {.630} & {.207} & {0.84} \\
    \cmidrule{1-16}
    {CVDTE} & {.689} & {.736} & {0.04} & {.731} & {.497} & {0.03} & {.917} & {.918} & {0.12} & {.718} & {.624} & {0.01} & {.689} & {.523} & {0.01} \\
    \cmidrule{1-16}
    {GLISTER} & {.698} & {.741} & {77.46} & {.793} & {$\mathbf{.649}$} & {40.79} & {.971} & {.974} & {14.52} & {.771} & {.736} & {2.07} & {.744} & {.603} & {1.89} \\
    {GRAD-MATCH} & {.686} & {.748} & {5.97} & {.795} & {.622} & {3.51} & {.968} & {.972} & {1.13} & {.630} & {.591} & {0.13} & {.679} & {.420} & {0.12} \\
    \cmidrule{1-16}
    {\bf \method{}} & {$\mathbf{.728}$} & {$\mathbf{.775}$} & {5.24} & {.796} & {.635} & {2.00} & {$\mathbf{.974}$} & {$\mathbf{.976}$} & {2.59} & {$\mathbf{.812}$} & {$\mathbf{.782}$} & {0.86} & {$\mathbf{.771}$} & {$\mathbf{.640}$} & {1.03} \\
    \bottomrule
  \end{tabular}
  \label{tbl:performance_real}
\end{table}

\subsection{More Results on Data Segment Selection Analysis}

We show more average precision and recall results when comparing \method{} to the {\em Best Segments} results on the five real datasets in Table~\ref{tbl:DADSS_table_appendix}. The observations are similar to Table~\ref{tbl:DADSS_table} where the recall is near to 1, and the precision is lower than 1 because of the additional similar data segments selected by \method{}. In addition, Figure~\ref{fig:selection_analysis_appendix} shows the accumulative model evaluation results against incoming data segments on the five real datasets. The observations are also similar to Figure~\ref{fig:selection_analysis} where \method{} sometimes performs even better than {\em Best Segments} because of the better generalization with the extra similar data segments.

\begin{table}[H]
  \setlength{\tabcolsep}{5.2pt}
  \caption{Comparison of \method{}'s selected data segments against the Best Segments results on the real datasets.}
  \centering
  \begin{tabular}{cccccccccc}
    \toprule
    {Metrics} & {\sf Electricity} & {\sf Weather} & {\sf Spam} & {\sf Usenet1} & {\sf Usenet2} \\
    \midrule
    Precision & 0.64 & 0.78 & 0.81 & 0.91 & 0.92 \\
    Recall & 0.95 & 0.90 & 0.93 & 1.00 & 1.00 \\
    \bottomrule
  \end{tabular}
  \label{tbl:DADSS_table_appendix}
\end{table}

\begin{figure}[H]
  \centering
  \begin{subfigure}{0.316\columnwidth}
     \centering
     \includegraphics[width=\columnwidth]{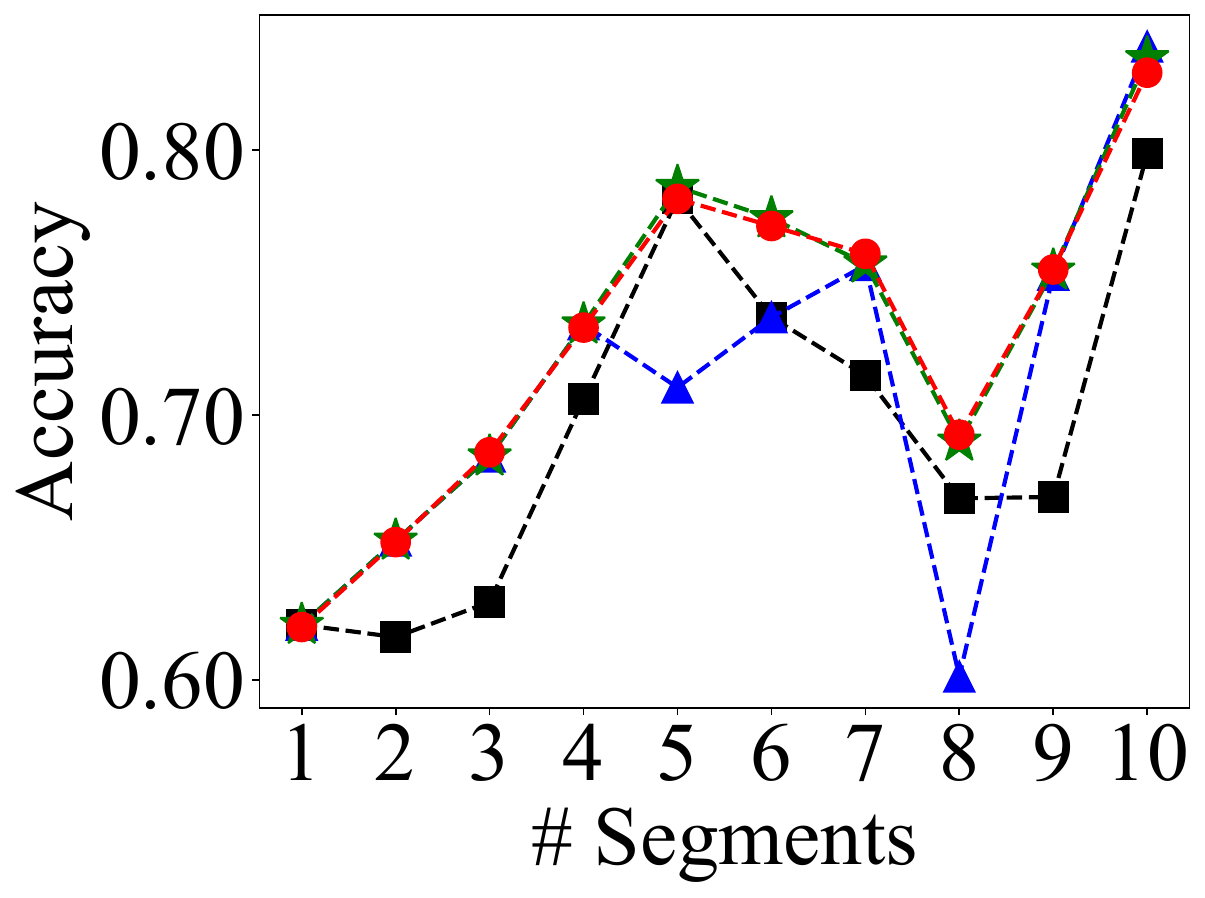}
     \caption{{\sf Electricity}}
     \label{fig:Electricity}
 \end{subfigure}
 \begin{subfigure}{0.316\columnwidth}
     \centering
     \includegraphics[width=\columnwidth]{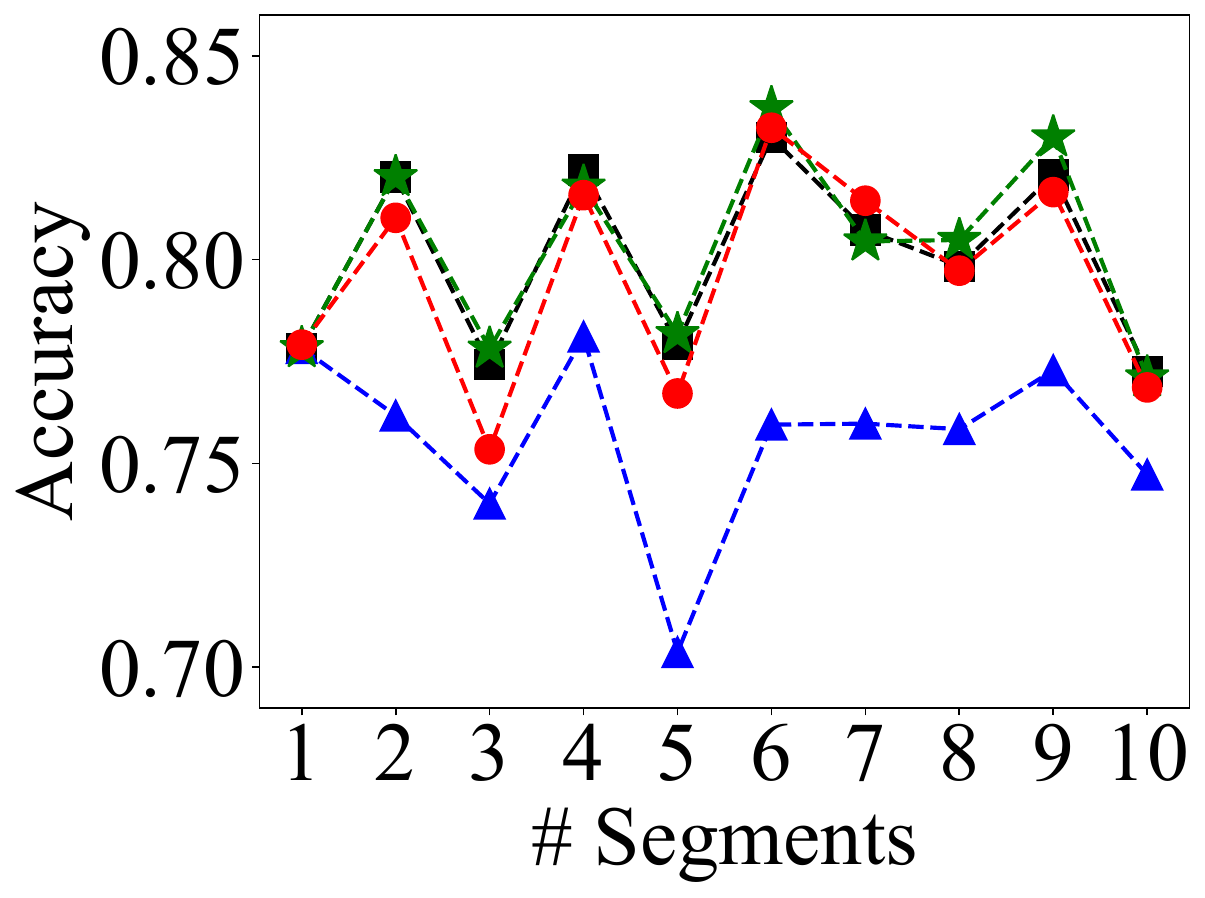}
     \caption{{\sf Weather}}
     \label{fig:Weather}
 \end{subfigure}
 \begin{subfigure}{0.316\columnwidth}
     \centering
     \includegraphics[width=\columnwidth]{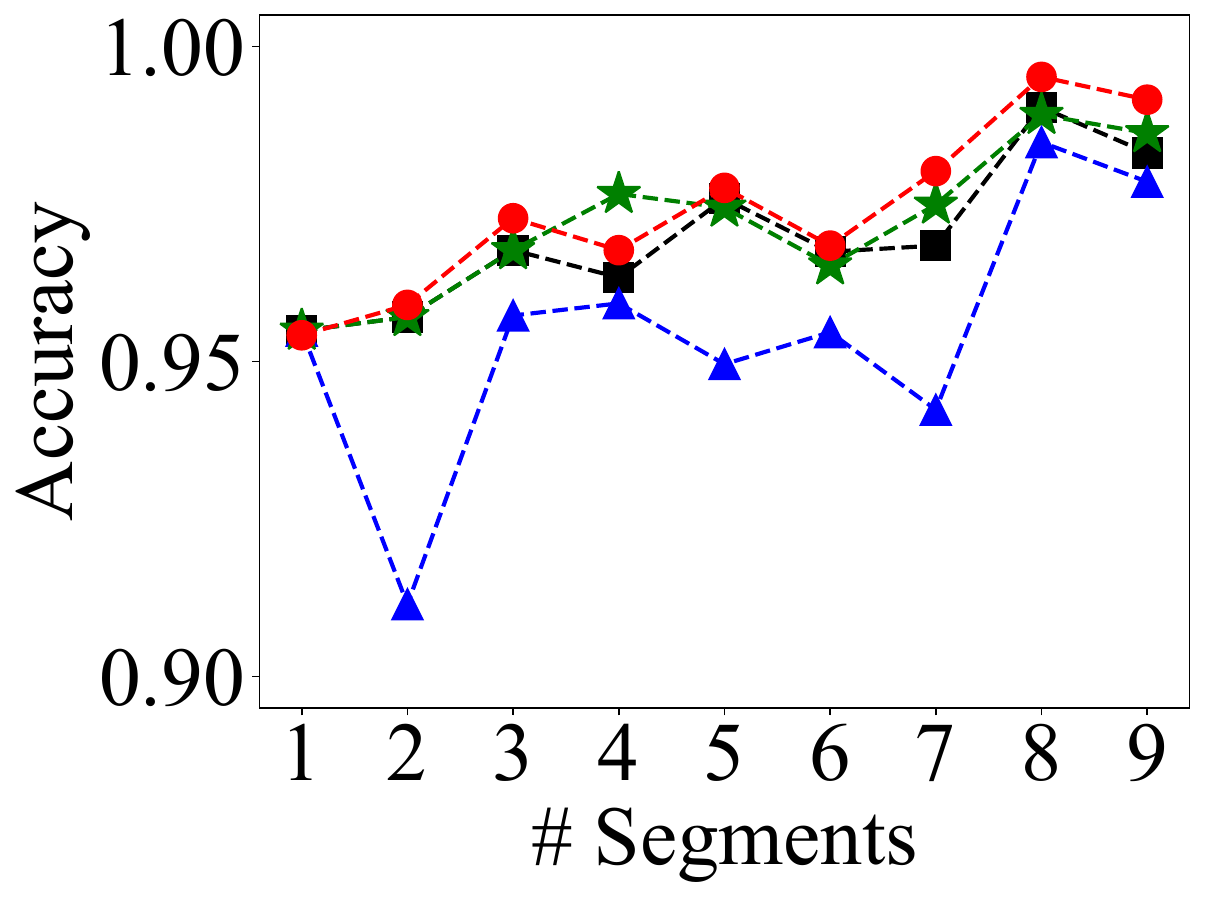}
     \caption{{\sf Spam}}
     \label{fig:Spam}
 \end{subfigure}
 \begin{subfigure}{0.316\columnwidth}
     \centering
     \includegraphics[width=\columnwidth]{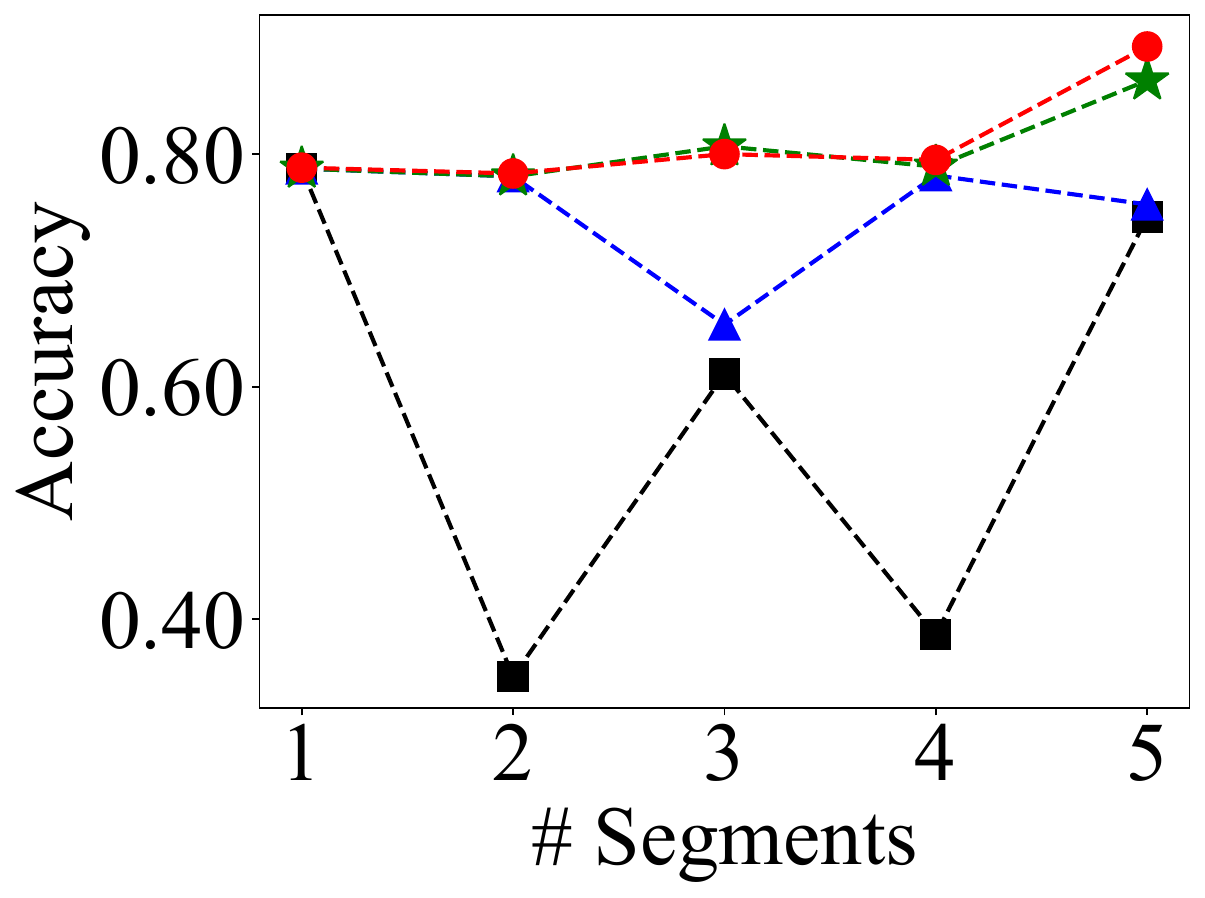}
     \caption{{\sf Usenet1}}
     \label{fig:Usenet1}
 \end{subfigure}
 \begin{subfigure}{0.316\columnwidth}
     \centering
     \includegraphics[width=\columnwidth]{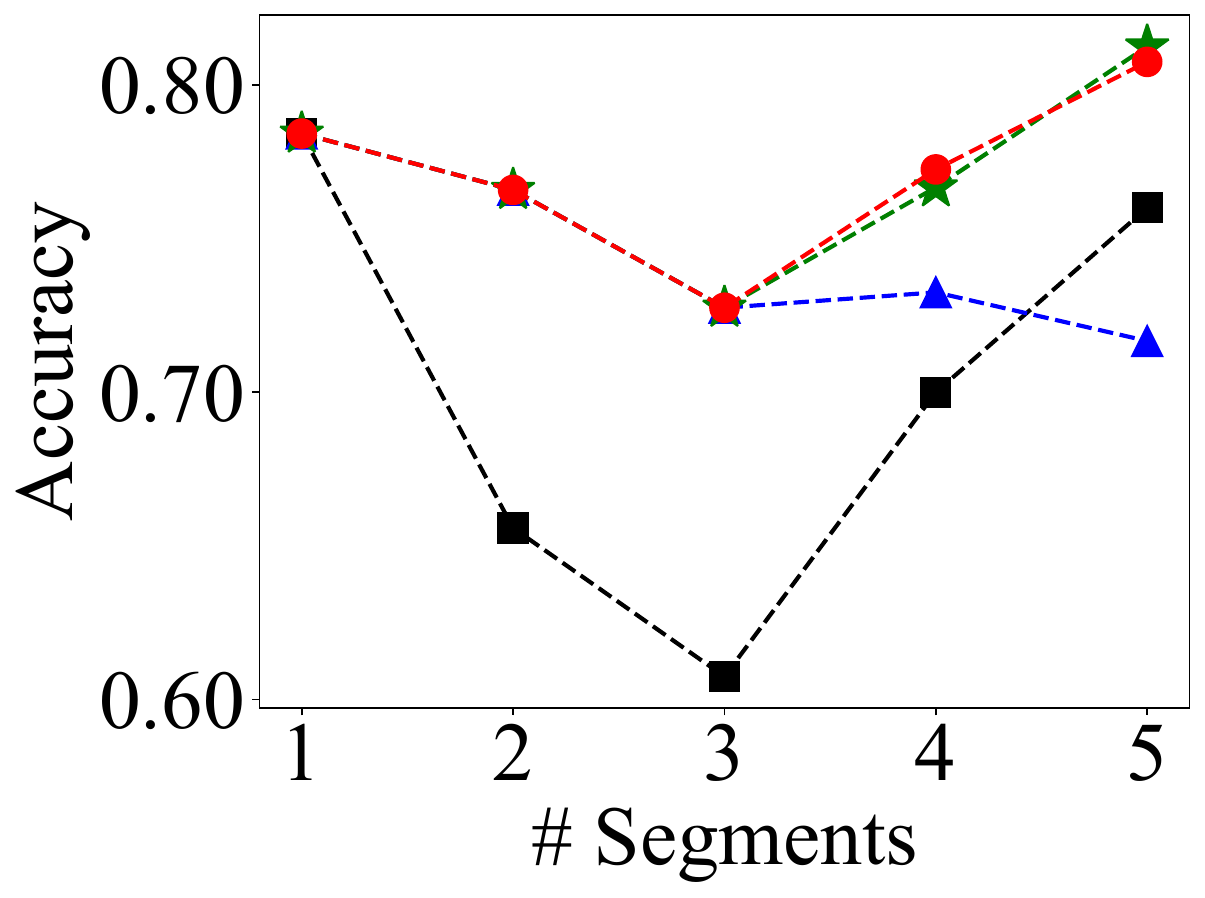}
     \caption{{\sf Usenet2}}
     \label{fig:Usenet2}
 \end{subfigure}
 \begin{subfigure}{0.316\columnwidth}
     \centering
     \includegraphics[width=\columnwidth]{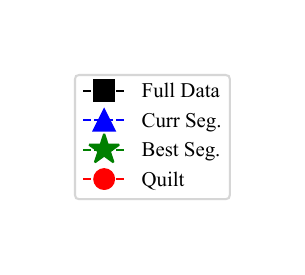}
     \label{fig:legend2}
 \end{subfigure}
 \caption{Accumulative model evaluation results against incoming data segments trained with different data segment selection methods on the five real datasets.}
 \label{fig:selection_analysis_appendix}
\end{figure}

\subsection{More Results on Ablation Study}

We show more results of the ablation study of \method{} to see how the two gradient-based scores contribute to the overall performance. Table~\ref{tbl:variations_all} evaluates \method{} variants when not using one or both scores for all the datasets. We compare the accuracy, $F_1$ score, runtime, the relative speedup compared to when not using the scores, the portion of data segments selected (Usage), and the number of epochs. As a result, the observations are similar to Table~\ref{tbl:variations_six}.

\begin{table}[h]
  \caption{Accuracy, $F_1$ score, runtime (sec), speedup, data segment usage results, and the number of epochs of \method{} when not using one or both scores on all the datasets.} 
  \centering
  \begin{tabular}{clccc|ccc}
  \toprule
    {Datasets} & {Methods} & {Acc.} & {F1.} & {Time (Speed Up)} & {Usage} & {Epochs}\\
    \midrule
    \multirow{4}{*}{\makecell{\sf SEA}} & {W/o both} & {.850\tiny{$\pm$.005}} & {.881\tiny{$\pm$.004}} & {4.49 (1.0$\times$)} & {100.0\%} & {13.9} \\ 
    & {W/o $\mathcal{D}$} & {.890\tiny{$\pm$.004}} & {.911\tiny{$\pm$.003}} & {2.57 (1.7$\times$)} & {42.2\%} & {27.0} \\
    & {W/o $\mathcal{G}$} & {.881\tiny{$\pm$.007}} & {.905\tiny{$\pm$.006}} & {3.06 (1.5$\times$)} & {49.6\%} & {37.0} \\ 
    & {\method{}} & {.888\tiny{$\pm$.004}} & {.909\tiny{$\pm$.003}} & {2.20 (2.0$\times$)} & {30.8\%} & {44.1} \\
    \midrule
    \multirow{4}{*}{\makecell{\sf Hyperplane}} & {W/o both} & {.844\tiny{$\pm$.005}} & {.845\tiny{$\pm$.005}} & {3.99 (1.0$\times$)} & {100.0\%} & {20.9} \\  
    & {W/o $\mathcal{D}$} & {.915\tiny{$\pm$.005}} & {.915\tiny{$\pm$.005}} & {2.39 (1.7$\times$)} & {42.3\%} & {39.9} \\
    & {W/o $\mathcal{G}$} & {.905\tiny{$\pm$.003}} & {.906\tiny{$\pm$.003}} & {2.66 (1.5$\times$)} & {35.3\%} & {75.2} \\ 
    & {\method{}} & {.911\tiny{$\pm$.007}} & {.912\tiny{$\pm$.007}} & {2.08 (1.9$\times$)} & {29.8\%} & {64.6} \\
    \midrule
    \multirow{4}{*}{\makecell{\sf Random RBF}} & {W/o both} & {.822\tiny{$\pm$.007}} & {.822\tiny{$\pm$.007}} & {16.51 (1.0$\times$)} & {100.0\%} & {86.2} \\  
    & {W/o $\mathcal{D}$} & {.828\tiny{$\pm$.008}} & {.828\tiny{$\pm$.008}} & {4.32 (3.8$\times$)} & {45.3\%} & {72.2} \\
    & {W/o $\mathcal{G}$} & {.829\tiny{$\pm$.011}} & {.829\tiny{$\pm$.011}} & {9.69 (1.7$\times$)} & {79.7\%} & {79.0} \\ 
    & {\method{}} & {.833\tiny{$\pm$.008}} & {.833\tiny{$\pm$.008}} & {3.22 (5.1$\times$)} & {39.4\%} & {76.0} \\
    \midrule
    \multirow{4}{*}{\makecell{\sf Sine}} & {W/o both} & {.444\tiny{$\pm$.032}} & {.438\tiny{$\pm$.040}} & {3.43 (1.0$\times$)} & {100.0\%} & {54.4} \\  
    & {W/o $\mathcal{D}$} & {.890\tiny{$\pm$.015}} & {.889\tiny{$\pm$.015}} & {2.75 (1.2$\times$)} & {36.5\%} & {95.2} \\
    & {W/o $\mathcal{G}$} & {.941\tiny{$\pm$.003}} & {.941\tiny{$\pm$.003}} & {7.77 (0.4$\times$)} & {23.2\%} & {245.0} \\ 
    & {\method{}} & {.936\tiny{$\pm$.005}} & {.936\tiny{$\pm$.005}} & {4.88 (0.7$\times$)} & {21.1\%} & {220.4} \\
    \midrule
    \multirow{4}{*}{\makecell{\sf Electricity}} & {W/o both} & {.696\tiny{$\pm$.009}} & {.758\tiny{$\pm$.008}} & {10.71 (1.0$\times$)} & {100.0\%} & {8.6} \\  
    & {W/o $\mathcal{D}$} & {.711\tiny{$\pm$.013}} & {.755\tiny{$\pm$.014}} & {6.57 (1.6$\times$)} & {52.9\%} & {23.1} \\
    & {W/o $\mathcal{G}$} & {.723\tiny{$\pm$.009}} & {.774\tiny{$\pm$.007}} & {6.68 (1.6$\times$)} & {39.6\%} & {41.0} \\ 
    & {\method{}} & {.728\tiny{$\pm$.007}} & {.775\tiny{$\pm$.007}} & {5.24 (2.0$\times$)} & {27.9\%} & {46.4} \\
    \midrule
    \multirow{4}{*}{\makecell{\sf Weather}} & {W/o both} & {.798\tiny{$\pm$.006}} & {.631\tiny{$\pm$.020}} & {7.67 (1.0$\times$)} & {100.0\%} & {24.1} \\  
    & {W/o $\mathcal{D}$} & {.794\tiny{$\pm$.004}} & {.629\tiny{$\pm$.014}} & {2.19 (3.5$\times$)} & {40.9\%} & {26.4} \\
    & {W/o $\mathcal{G}$} & {.800\tiny{$\pm$.006}} & {.643\tiny{$\pm$.020}} & {6.85 (1.1$\times$)} & {75.5\%} & {32.8} \\ 
    & {\method{}} & {.796\tiny{$\pm$.004}} & {.635\tiny{$\pm$.012}} & {2.00 (3.8$\times$)} & {30.7\%} & {37.6} \\
    \midrule
    \multirow{4}{*}{\makecell{\sf Spam}} & {W/o both} & {.970\tiny{$\pm$.003}} & {.973\tiny{$\pm$.002}} & {4.51 (1.0$\times$)} & {100.0\%} & {12.6} \\  
    & {W/o $\mathcal{D}$} & {.973\tiny{$\pm$.004}} & {.975\tiny{$\pm$.003}} & {2.31 (2.0$\times$)} & {52.1\%} & {24.6} \\
    & {W/o $\mathcal{G}$} & {.972\tiny{$\pm$.002}} & {.975\tiny{$\pm$.002}} & {4.02 (1.1$\times$)} & {60.5\%} & {20.4} \\ 
    & {\method{}} & {.974\tiny{$\pm$.003}} & {.976\tiny{$\pm$.002}} & {2.59 (1.7$\times$)} & {39.7\%} & {31.4} \\
    \midrule
    \multirow{4}{*}{\makecell{\sf Usenet1}} & {W/o both} & {.583\tiny{$\pm$.043}} & {.522\tiny{$\pm$.074}} & {0.41 (1.0$\times$)} & {100.0\%} & {9.1} \\  
    & {W/o $\mathcal{D}$} & {.820\tiny{$\pm$.020}} & {.786\tiny{$\pm$.027}} & {0.42 (1.0$\times$)} & {49.1\%} & {22.4} \\
    & {W/o $\mathcal{G}$} & {.814\tiny{$\pm$.038}} & {.780\tiny{$\pm$.043}} & {0.45 (0.9$\times$)} & {38.6\%} & {30.0} \\ 
    & {\method{}} & {.812\tiny{$\pm$.026}} & {.782\tiny{$\pm$.027}} & {0.51 (0.8$\times$)} & {38.0\%} & {36.9} \\
    \midrule
    \multirow{4}{*}{\makecell{\sf Usenet2}} & {W/o both} & {.703\tiny{$\pm$.015}} & {.418\tiny{$\pm$.060}} & {0.40 (1.0$\times$)} & {100.0\%} & {4.8} \\  
    & {W/o $\mathcal{D}$} & {.754\tiny{$\pm$.026}} & {.573\tiny{$\pm$.040}} & {0.37 (1.1$\times$)} & {41.1\%} & {22.7} \\
    & {W/o $\mathcal{G}$} & {.772\tiny{$\pm$.032}} & {.625\tiny{$\pm$.035}} & {0.60 (0.7$\times$)} & {30.0\%} & {48.8} \\ 
    & {\method{}} & {.771\tiny{$\pm$.031}} & {.640\tiny{$\pm$.035}} & {0.57 (0.7$\times$)} & {28.0\%} & {49.0} \\
  \bottomrule
  \end{tabular}
  \label{tbl:variations_all}
\end{table}

\end{document}